\documentclass[lettersize,journal]{IEEEtran}
\usepackage{graphicx}
\usepackage{cite}
\usepackage{picinpar}
\usepackage{amsmath}
\usepackage{url}
\usepackage{flushend}
\usepackage[latin1]{inputenc}
\usepackage{colortbl}
\usepackage{soul}
\usepackage{pifont}
\usepackage{color}
\usepackage{alltt}
\usepackage{hyperref}
\usepackage{enumerate}
\usepackage{siunitx}
\usepackage{breakurl}
\usepackage{epstopdf}
\usepackage{pbox}
\usepackage{bm}
\usepackage{array} 
\usepackage{svg}
\usepackage{float}
\usepackage{booktabs}
\usepackage{amssymb}
\usepackage{xurl}
\usepackage{xcolor}     

\usepackage[caption=false,font=scriptsize,labelfont=sf,textfont=sf]{subfig}

\usepackage{makecell}
\usepackage{multirow,tabularx}
\usepackage{algorithm}
\usepackage{algorithmic}
\usepackage{threeparttable}
\usepackage[most]{tcolorbox}

\usepackage{mathtools} 

\newcounter{proofcounter} 
\newtheorem{assumption}{Assumption}
\newtheorem{theorem}{Theorem}
\newtheorem{lemma}{Lemma}
\newenvironment{proof}{{\indent \indent \it Proof~\theproofcounter:}\stepcounter{proofcounter}}{\hfill $\blacksquare$\par}

\definecolor{myyellow}{RGB}{240,230,140}
  
\newtheorem{remark}{Remark}

\newtcbox{\cbox}[1][]{
    on line,
    boxrule=0pt,        
    colback=#1,         
    colframe=#1,
    arc=2pt,            
    boxsep=0.1pt,         
    left=1pt,
    right=1pt,
    top=1pt,
    bottom=1pt,
}

\begin{document}

\title{Autonomous UAV Pipeline Near-proximity Inspection via Disturbance-Aware Predictive Visual Servoing}

\author{
\vskip 1em
	Wen Li$^{1}$, Hui Wang$^{1}$, Jinya Su$^{1}$, Cunjia Liu$^{2}$, Wen-Hua Chen$^{3}$,~\emph{IEEE Fellow} and Shihua Li$^{1}$,~\emph{IEEE Fellow}
\thanks{*This work was supported by the National Natural Science Foundation of China under Grant 62303110.
Corresponding author: Prof. Jinya Su.}
\thanks{$^{1}$Wen Li, Hui Wang, Jinya Su, and Shihua Li are with the School of Automation, Key Laboratory of Measurement and Control of CSE, Ministry of Education, 
Southeast University, Nanjing 210096, China. {\tt\small lwen@seu.edu.cn;
220252044@seu.edu.cn; sucas@seu.edu.cn; lsh@seu.edu.cn}.}%
\thanks{$^{2}$Cunjia Liu is with the Department of Aeronautical and Automotive Engineering, Loughborough University, LE11 3TU, United Kingdom. {\tt\small c.liu5@lboro.ac.uk}.}%
\thanks{$^{3}$Wen-Hua Chen  is with the Research Centre for Low Altitude Economy and the  Department of Aeronautical and Aviation Engineering, The Hong Kong Polytechnic University, Hong Kong, China. {\tt\small wenhua.chen@polyu.edu.hk}.}%
}

\maketitle
\begin{abstract}

Reliable pipeline inspection is critical to safe energy transportation, but is constrained by long distances, complex terrain, and risks to human inspectors. Unmanned aerial vehicles provide a flexible sensing platform, yet reliable autonomous inspection remains challenging. This paper presents an autonomous quadrotor near-proximity pipeline inspection framework for three-dimensional scenarios based on image-based visual servoing model predictive control (VMPC). A unified predictive model couples quadrotor dynamics with image feature kinematics, enabling direct image-space prediction within the control loop. To address low-rate visual updates, measurement noise, and environmental uncertainties, an extended-state Kalman filtering scheme with image feature prediction (ESKF-PRE) is developed, and the estimated lumped disturbances are incorporated into the VMPC prediction model, yielding the ESKF-PRE-VMPC framework. A terrain-adaptive velocity design is introduced to maintain the desired cruising speed while generating vertical velocity references over unknown terrain slopes without prior terrain information. The framework is validated in high-fidelity Gazebo simulations and real-world experiments. In real-world tests, the proposed method reduces RMSE by \(52.63\%\) and \(75.04\%\) in pipeline orientation and lateral deviation in the image, respectively, for straight-pipeline inspection without wind, and successfully completes both wind-disturbance and bend-pipeline tasks where baseline method fails. An open-source nano quadrotor is modified for indoor experimentation\footnote{The hardware and code are open-sourced at \url{https://github.com/lw-seu/Crazyflie-modification} and \url{https://github.com/lw-seu/VS_mpc} (to be released). The project webpage is available at \url{https://eskf-pre-vmpc.github.io/}.}. 

\end{abstract}

\renewcommand{\thefootnote}{}
\footnotetext{\textbf{Copyright:} This work has been submitted to the IEEE for possible publication. Copyright may be transferred without notice, after which this version may no longer be accessible.}



\begin{IEEEkeywords}
UAV inspection, Visual servoing, Model predictive control, Disturbance rejection, 3D inspection
\end{IEEEkeywords}


\section{INTRODUCTION}

Pipelines are indispensable infrastructures for modern energy transportation, and maintaining their structural integrity is critical. Because these networks are widely distributed and exposed to ageing, corrosion, and harsh operating conditions, periodic inspection is indispensable. Defects such as corrosion, deformation, or cracks may cause leakage, service interruption, severe environmental contamination, and even catastrophic fire or explosion events~\cite{iqbal2017inspection}. 
In practice, however, pipeline inspection remains difficult because many pipelines extend across remote areas, uneven terrain, and safety-critical environments where routine manual patrols are inefficient, costly, and potentially dangerous. Unmanned Aerial Vehicles (UAVs), with their mobility, flexible sensing capability, and ability to access 
hard-to-reach areas, have consequently emerged as an attractive alternative to conventional inspection approaches, offering the potential to improve efficiency while reducing operational risks and manpower requirements~\cite{su2023ai}.

Despite this promise, many UAV inspection systems still rely heavily on manual piloting or operator supervision~\cite{liu2022uav}. Achieving a higher level of autonomy requires a control strategy that remains accurate and reliable in close-proximity inspection scenarios. Although GPS signals are commonly used for aerial navigation, they can become unreliable in GPS-denied or degraded environments~\cite{gyagenda2022review} and generally do not provide the precision needed for low-altitude, near-structure 
inspection. Visual servoing provides a compelling alternative by directly regulating the UAV motion using onboard visual feedback~\cite{chaumette2006visual}. Among different visual servoing paradigms~\cite{xu2025matters}, Image-Based Visual Servoing (IBVS) is attractive for inspection tasks because it controls the target directly in the image plane, avoiding explicit three-dimensional (3D) reconstruction and thereby reducing sensitivity to calibration inaccuracies~\cite{zhang2021robust}. 

Nevertheless, translating IBVS into a reliable control solution for quadrotor inspection is far from straightforward. Small quadrotors
are characterized by low inertia, limited payload, underactuated dynamics, and uncertainties caused by unmodeled aerodynamics and parameter variations~\cite{eschmann2025raptor}. These properties make non-model-based IBVS schemes difficult to apply, 
since they typically neglect the vehicle dynamics as well as visibility-related constraints. As a result, aggressive or delayed corrective actions may drive the visual target out of FOV, leading to degraded servoing performance or even loss of closed-loop stability~\cite{yang2024robust}. This issue becomes even more critical in practice, where visual measurements are often available at a lower update rate and with poorer quality than the onboard state feedback. Model Predictive Control (MPC) provides a principled way to address these challenges by explicitly incorporating system dynamics and constraints into a finite-horizon optimization framework. By optimizing the future control sequence online, MPC can improve both motion quality and visibility preservation, thereby offering a more reliable basis for autonomous UAV inspection~\cite{zhang2021robust}.

However, the performance of MPC can deteriorate significantly when the predictive model is affected by uncertainties and external disturbances~\cite{zhang2025damppi}, both of which are unavoidable in outdoor or near-structure pipeline inspection. To improve robustness, a broad range of MPC formulations has been developed, including robust MPC~\cite{zhang2021robust}, Tube-MPC~\cite{langson2004robust}, and disturbance estimation-based MPC approaches~\cite{liu2012tracking}. Among them, disturbance estimation-based MPC is particularly attractive because it can be interpreted as a two-degree-of-freedom control architecture: the nominal predictive controller retains desirable tracking behaviour, while the disturbance estimation channel actively compensates for model mismatch and external perturbations~\cite{chen2015disturbance,sun2017disturbance}. Such a structure is well suited to UAV inspection tasks, where both responsiveness and robustness are simultaneously required.

Existing disturbance estimation-based MPC methods can generally be grouped into three categories: (1) direct feedforward compensation using estimated disturbances, (2) prediction model augmentation with constant disturbance estimates, and (3) incorporation of disturbance preview into the predictive model. For instance, \cite{liu2012tracking} proposed an explicit MPC framework with a nonlinear disturbance observer to provide feedforward compensation for small-scale helicopters under wind disturbances. 
\cite{yang2015design,zhang2025damppi} enhanced prediction accuracy by augmenting the MPC model with disturbance estimates assumed constant over the prediction horizon. More recently, disturbance preview has been introduced into predictive control formulations~\cite{grasshoff2019model,zhan2022computationally,zhang2026predmppi}, where the preview information may be obtained from physical disturbance models~\cite{walker2025nonlinear}, parametric modelling and estimation~\cite{castillo2021predicting}, or non-parametric learning methods
~\cite{grasshoff2019model}. 

Despite these advances, 
there is still a lack of \textit{a unified VMPC framework that simultaneously addresses quadrotor dynamics, low-rate visual updates, measurement noise, and external disturbances, with validation in realistic three-dimensional pipeline inspection scenarios}. Motivated by the above limitations, this paper develops a unified framework for autonomous quadrotor pipeline inspection in 3D environments. A comparative summary of representative visual servoing and UAV inspection approaches is provided in Table~\ref{comparison}, and the main contributions are summarized as follows:

\begin{enumerate}

\item[(1)] \textbf{Visual servoing modelling}: A unified predictive visual servoing model is established by explicitly coupling quadrotor dynamics with image feature kinematics, enabling integrated model predictive visual servoing for UAV pipeline inspection.

\item[(2)] \textbf{ESKF-PRE-VMPC}: An ESKF-PRE-VMPC framework is developed by integrating an extended-state Kalman filter with image feature prediction into visual servoing MPC, thereby improving robustness against measurement noise, low-rate vision updates, and disturbances.

\item[(3)] \textbf{Terrain-adaptive velocity control}: Unlike 
planar or quasi-2D inspection scenarios~\cite{xing2023autonomous, velasco2024visual}, an intermediate MPC state design is introduced to maintain the desired cruising speed while adapting the vertical motion to complex 3D terrains without prior terrain knowledge.

\item[(4)] \textbf{Comparative verification}: The proposed framework is validated through high-fidelity Gazebo simulations and real-world quadrotor experiments in 3D pipeline inspection scenarios, demonstrating improved stability and tracking robustness over three representative baselines.

\end{enumerate}

\begingroup
\setlength{\tabcolsep}{1.8pt}
\renewcommand{\arraystretch}{1.08}
\begin{table}[t]
\centering
\scriptsize
\caption{Comparison of visual servoing MPC variants. \emph{Img.}=image, \emph{Dyn.}=dynamics, \emph{DC}=disturbance compensation, \emph{Inspect. App.}=inspection application, \emph{Feat. Pred.}=feature prediction.}
\label{comparison}
\vspace{-1.5mm}





\begin{tabularx}{\linewidth}{@{}p{0.39\linewidth}
    >{\centering\arraybackslash}X
    >{\centering\arraybackslash}X
    >{\centering\arraybackslash}X
    >{\centering\arraybackslash}X@{}}
\toprule
\textbf{Methods} &
\makecell[c]{\textbf{Img.}\\\textbf{DC}} &
\makecell[c]{\textbf{Dyn.}\\\textbf{DC}} &
\makecell[c]{\textbf{Inspect.}\\\textbf{App.}} &
\makecell[c]{\textbf{Feat.}\\\textbf{Pred.}} \\
\midrule

\makecell[l]{Cascaded VMPC\cite{roque2020fast,velasco2024visual}} &
\ding{53} &
\ding{53} &
\makecell[c]{\checkmark\\\scriptsize\cite{velasco2024visual}} &
\ding{53} \\

\makecell[l]{Integrated VMPC-K\cite{yang2024robust,qiu2020disturbance}} &
\makecell[c]{\checkmark\\\scriptsize\cite{qiu2020disturbance}} &
\ding{53} &
\ding{53} &
\ding{53} \\

\makecell[l]{Integrated VMPC-D\cite{zhang2021robust,xing2023autonomous,conference}} &
\makecell[c]{\checkmark\\\scriptsize\cite{conference}} &
\makecell[c]{\checkmark\\\scriptsize\cite{conference}} &
\makecell[c]{\checkmark\\\scriptsize\cite{xing2023autonomous,conference}} &
\ding{53} \\

\makecell[l]{\textcolor{blue}{\textbf{ESKF-PRE-VMPC}}} &
\textcolor{blue}{\textbf{\checkmark}} &
\textcolor{blue}{\textbf{\checkmark}} &
\textcolor{blue}{\textbf{\checkmark}} &
\textcolor{blue}{\textbf{\checkmark}} \\
\midrule

\multicolumn{5}{>{\columncolor{gray!10}}p{\dimexpr\columnwidth-2\tabcolsep\relax}@{}}%
{\textbf{Open gap:} \textit{A unified VMPC framework that addresses low-rate vision updates, measurement noise, and disturbances, with validation in realistic 3D pipeline inspection scenarios, remains limited in the literature.}} \\
\bottomrule
\end{tabularx}

\vspace{-3.5mm}
\end{table}
\endgroup

This article is an extended version of our conference paper~\cite{conference}, where the main extensions
are summarized below:
\begin{enumerate}
    \item[(1)] \textbf{Enhanced estimation and visual prediction:} In addition to lumped disturbance estimation, this work explicitly addresses measurement noise and low-rate visual updates by developing an enhanced estimation scheme with image feature prediction, thereby providing more reliable high-frequency visual information for closed-loop control;
    
    \item[(2)] \textbf{Closed-loop analysis:} A theoretical analysis is introduced to establish the recursive feasibility and asymptotic stability properties of the proposed control framework;
    
    
    \item[(3)] \textbf{Fast real-world validation:} Beyond more realistic simulation  verification, real-world quadrotor experiments are conducted to further evaluate the proposed approach. To facilitate rapid indoor validation,
    a modified Crazyflie~\cite{giernacki2017crazyflie} platform with enhanced payload capacity is developed and openly published.
\end{enumerate}

\textit{\textbf{Acronyms and explanations}}: \textbf{MPC}: \textbf{M}odel \textbf{P}redictive \textbf{C}ontrol; \textbf{VMPC}: \textbf{V}isual servoing \textbf{MPC}; \textbf{Cascaded VMPC}: Visual servoing module generates a reference signal, which is then tracked by an MPC; \textbf{Integrated VMPC-K}: Image feature kinematics are used as the predictive model in MPC, which directly computes the control inputs; \textbf{Integrated VMPC-D}: Both the image features kinematics and the quadrotor dynamics are incorporated into the MPC formulation. \textbf{ESKF-PRE-VMPC}: \textbf{E}xtended \textbf{S}tate based \textbf{K}alman \textbf{F}ilter with Image Features \textbf{Pre}diction (ESKF-PRE) \textbf{VMPC} framework.

\section{PRELIMINARIES}\label{sec:preliminaries}

\subsection{Coordinate Frames and Notation}

Five coordinate frames are employed in this work, as illustrated in Fig.~\ref{frame}: the inertial world frame \(W\), the quadrotor body frame \(B\), the camera frame \(C\), and two 2D frames on the image plane, namely the pixel image frame \(I\) and the normalized image frame \(\Gamma\). These frames are represented by the orthonormal bases \(\{\vec{x}_W,\vec{y}_W,\vec{z}_W\}\), \(\{\vec{x}_B,\vec{y}_B,\vec{z}_B\}\), \(\{\vec{x}_C,\vec{y}_C,\vec{z}_C\}\), \(\{\vec{u},\vec{v}\}\), and \(\{\vec{x}_{\Gamma},\vec{y}_{\Gamma}\}\), respectively. The axis \(\vec{z}_W\) is aligned with the gravity direction. The body frame \(B\) is rigidly attached to the quadrotor, while the camera frame \(C\) is centered at the optical center of the onboard camera.
\begin{figure}[htbp!]
    \centering
    \includegraphics[width=0.9\linewidth, height=0.5\linewidth]{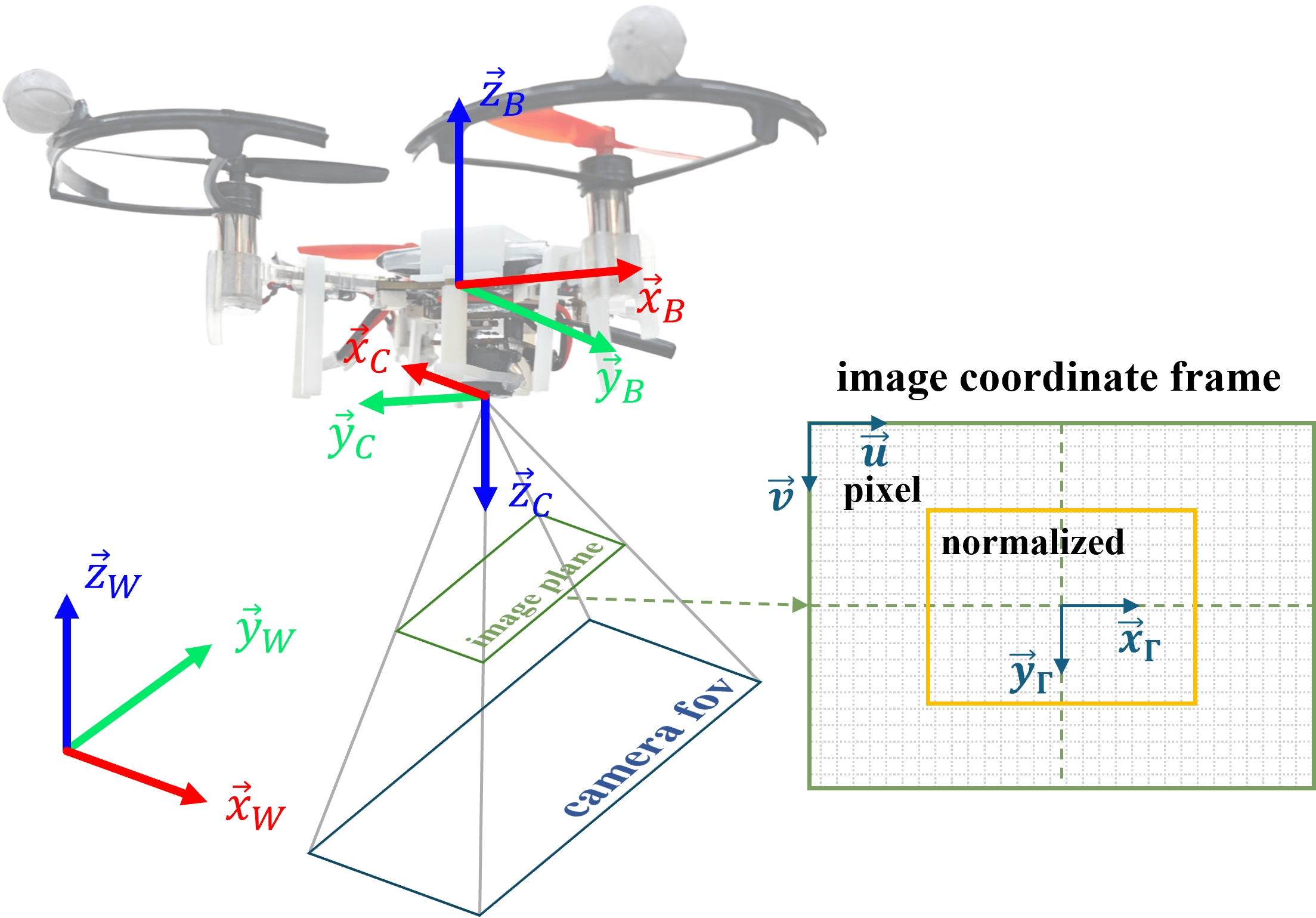}
    \caption{\textbf{Coordinate frames} in quadrotor inspection.}
    \label{frame}
    \vspace{-0.2cm}
\end{figure}


In this study, bold lowercase letters denote vectors, bold uppercase letters denote matrices, and regular letters denote scalars. For frame-dependent variables, a presuperscript indicates that the quantity is expressed in the corresponding local frame, whereas a postsuperscript indicates that the quantity is expressed in the world frame. For example, \(^{B}\bm{\omega}\) denotes the angular velocity of the body frame relative to the world frame, expressed in frame \(B\), whereas \(\bm{v}^{B}\) denotes the linear velocity of the body frame relative to the world frame, expressed in frame \(W\).

The quadrotor attitude is parameterized by the Euler-angle vector \(\bm{\psi}^{B}=(\phi,\beta,\alpha)^\mathsf{T}\), where \(\phi\), \(\beta\), and \(\alpha\) represent the roll, pitch, and yaw angles, respectively. The corresponding rotation matrix \(\bm{R}_{WB}\in \mathcal{SO}(3)\) from frame \(B\) to frame \(W\) is
\begin{equation}\nonumber
\bm{R}_{WB}=
\begin{bmatrix}
\mathrm{c}\alpha\,\mathrm{c}\beta &
\mathrm{c}\alpha\,\mathrm{s}\beta\,\mathrm{s}\phi-\mathrm{s}\alpha\,\mathrm{c}\phi &
\mathrm{c}\alpha\,\mathrm{s}\beta\,\mathrm{c}\phi+\mathrm{s}\alpha\,\mathrm{s}\phi \\[3pt]
\mathrm{s}\alpha\,\mathrm{c}\beta &
\mathrm{c}\alpha\,\mathrm{c}\phi+\mathrm{s}\alpha\,\mathrm{s}\beta\,\mathrm{s}\phi &
\mathrm{s}\alpha\,\mathrm{s}\beta\,\mathrm{c}\phi-\mathrm{c}\alpha\,\mathrm{s}\phi \\[3pt]
-\mathrm{s}\beta &
\mathrm{c}\beta\,\mathrm{s}\phi &
\mathrm{c}\beta\,\mathrm{c}\phi
\end{bmatrix},
\end{equation}
where \(\mathrm{c}(\cdot)\) and \(\mathrm{s}(\cdot)\) are 
for \(\cos(\cdot)\) and \(\sin(\cdot)\), respectively.

\subsection{Inspection Task and Multi-Rate Measurement}

The considered task is autonomous visual inspection of 3D pipeline structures using a quadrotor equipped with a downward-facing onboard camera. Instead of relying on absolute positioning, the control objective is formulated directly in the image space: the pipeline should remain close to the image center with the desired orientation and apparent scale, while the quadrotor maintains the prescribed forward inspection speed. 
The inspection objective can be compactly stated as
\begin{equation}
\bm{\chi} \rightarrow \bm{\chi}_d,\qquad v_h \rightarrow v_{h_d},\qquad \rho \rightarrow \rho_d,
\label{eq:inspection_objective}
\end{equation}
where \(\bm{\chi}\in\mathbb{R}^{2}\) denotes image feature vector for visual servoing, \(v_h\) is the horizontal inspection speed, and \(\rho\) denotes the pipeline width in the image. Their desired values will be explicitly defined in the subsequent sections. 

A practical challenge lies in the multi-rate nature of the sensing pipeline. The quadrotor states available from onboard sensing at a relatively high frequency, whereas image features from the camera are obtained more intermittently at a much lower effective rate.
Let \(t_k=kT_s\) denote the high-rate measurement time grid, and let \(\{t_{k_j}^{\chi}\}_{j=0}^{\infty}\) denote the time instants at which reliable image features are available. The measurement process can be described as
\begin{equation}
\begin{aligned}
\bm{y}^{s}_{k}=\bm{x}^{s}_{k}+\bm{n}^{s}_{k},~
\bm{y}^{\chi}_{k}=
\begin{cases}
\bm{\chi}_k+\bm{n}^{\chi}_k, & k\in\{k_j\},\\
\varnothing, & \text{otherwise},
\end{cases}
\label{eq:vision_measurement}
\end{aligned}
\end{equation}
where \(\bm{x}^{s}_{k}\) denotes quadrotor states available at the high-rate time grid, while \(\bm{n}^{s}_{k}\) and \(\bm{n}^{\chi}_{k}\) denote the measurement noise. Consequently, directly using raw visual measurements in the controller may introduce delayed or inconsistent feedback. This motivates the prediction-enhanced estimation strategy developed later in this paper, where high-rate quadrotor state information and intermittent image measurements are fused to provide reliable full state estimation for predictive control.

\subsection{Quadrotor Model}
\label{Qdynamics}

The quadrotor is modeled as a rigid body with four rotors. Let \(\bm{v}^{B}=(v_x^{B},v_y^{B},v_z^{B})^\mathsf{T}\) and \(\bm{\psi}^{B}\) denote the linear velocity and attitude of the quadrotor, respectively. Following the standard Newton--Euler formulation~\cite{mahony2012multirotor}, the nonlinear quadrotor model used in this work is given by
\begin{equation}
\label{quadrotor dynamics}
\left\{
\begin{aligned}
\dot{\bm{v}}^{B} &= \bm{R}_{WB}\frac{{}^{B}\bm{c}}{m} + \bm{g}, \\
\dot{\bm{\psi}}^{B} &= \bm{M}\,{}^{B}\bm{\omega},
\end{aligned}
\right.
\end{equation}
where \({}^{B}\bm{c}=(0,0,c)^\mathsf{T}\) is the collective thrust vector along the \(\vec{z}_B\)-axis of the body frame, \(m\) is the quadrotor mass, and \(\bm{g}=(0,0,-g)^\mathsf{T}\) is the gravity vector.
The matrix \(\bm{M}\) is the Jacobian that maps the Euler-angle rates to the body angular velocity, and \({}^{B}\bm{\omega}=({}^{B}\omega_x,{}^{B}\omega_y,{}^{B}\omega_z)^\mathsf{T}\) is the angular velocity expressed in frame \(B\).


\section{MODELLING}

\subsection{Image Kinematics for IBVS}
\label{Ikinematics}

According to the pinhole camera model~\cite{Hartley2004}, a point  \(\bm{p}_C=(x_C,y_C,z_C)\) expressed in the camera frame satisfies
\begin{equation}
\label{pinhole}
\left[
\begin{matrix}
\bm{p}_I^\mathsf{T} \\
1
\end{matrix}\right]
=
\begin{bmatrix}
f_u & 0 & c_u \\
0 & f_v & c_v \\
0 & 0 & 1
\end{bmatrix}
\left[
\begin{matrix}
\bm{p}_{\Gamma}^\mathsf{T} \\
1
\end{matrix}
\right]
=
\bm{K}
\left[
\begin{matrix}
\bm{p}_{\Gamma}^\mathsf{T} \\
1
\end{matrix}
\right],
\end{equation}
where $\bm{K}$ denotes the camera intrinsic matrix, $\bm{p}_{\Gamma}=(x_{\Gamma},y_{\Gamma})=\left(\frac{x_C}{z_C},\frac{y_C}{z_C}\right)$ is the projection of the point onto the normalized image plane, and $\bm{p}_I=(u_I,v_I)$ is the corresponding pixel coordinate in the image frame $I$.

In this work, a straight line $l$, defined by two feature points $\bm{\eta}_1=(x_{\Gamma_1},y_{\Gamma_1})$ and $\bm{\eta}_2=(x_{\Gamma_2},y_{\Gamma_2})$, is used to represent center line of the pipe in the normalized image frame $\Gamma$. These feature points are obtained from detected points in the image frame $I$ via the affine transformation in~\eqref{pinhole}. The image feature vector is defined as \(\bm{\chi}=(\theta,r)^\mathsf{T}\in\mathbb{R}^2\), where \(\theta\) and \(r\) denote the orientation and position of $l$, respectively, i.e.,
\begin{equation}
\label{eq:chi_def}
\bm{\chi}
=
\left(
\begin{matrix}
\theta \\
r
\end{matrix}
\right)
=
\left(
\begin{array}{c}
\arctan\!\left(-\dfrac{y_{\Gamma_2}-y_{\Gamma_1}}{x_{\Gamma_2}-x_{\Gamma_1}}\right) \\[0.8em]
\left(x_{\Gamma_1}-\dfrac{x_{\Gamma_2}-x_{\Gamma_1}}{y_{\Gamma_2}-y_{\Gamma_1}}y_{\Gamma_1}\right)\sin\theta
\end{array}
\right).
\end{equation}
Accordingly, the desired image feature for pipeline centering and vertical alignment is given by
$\bm{\chi}_d=
\left(
\frac{\pi}{2},\,0
\right)^\mathsf{T}.
$
The geometric interpretation of \(\bm{\chi}\) is illustrated in Fig.~\ref{image model}.
\begin{figure}[htbp!]
    \centering
    \includegraphics[width=0.85\linewidth, height=0.6\linewidth]{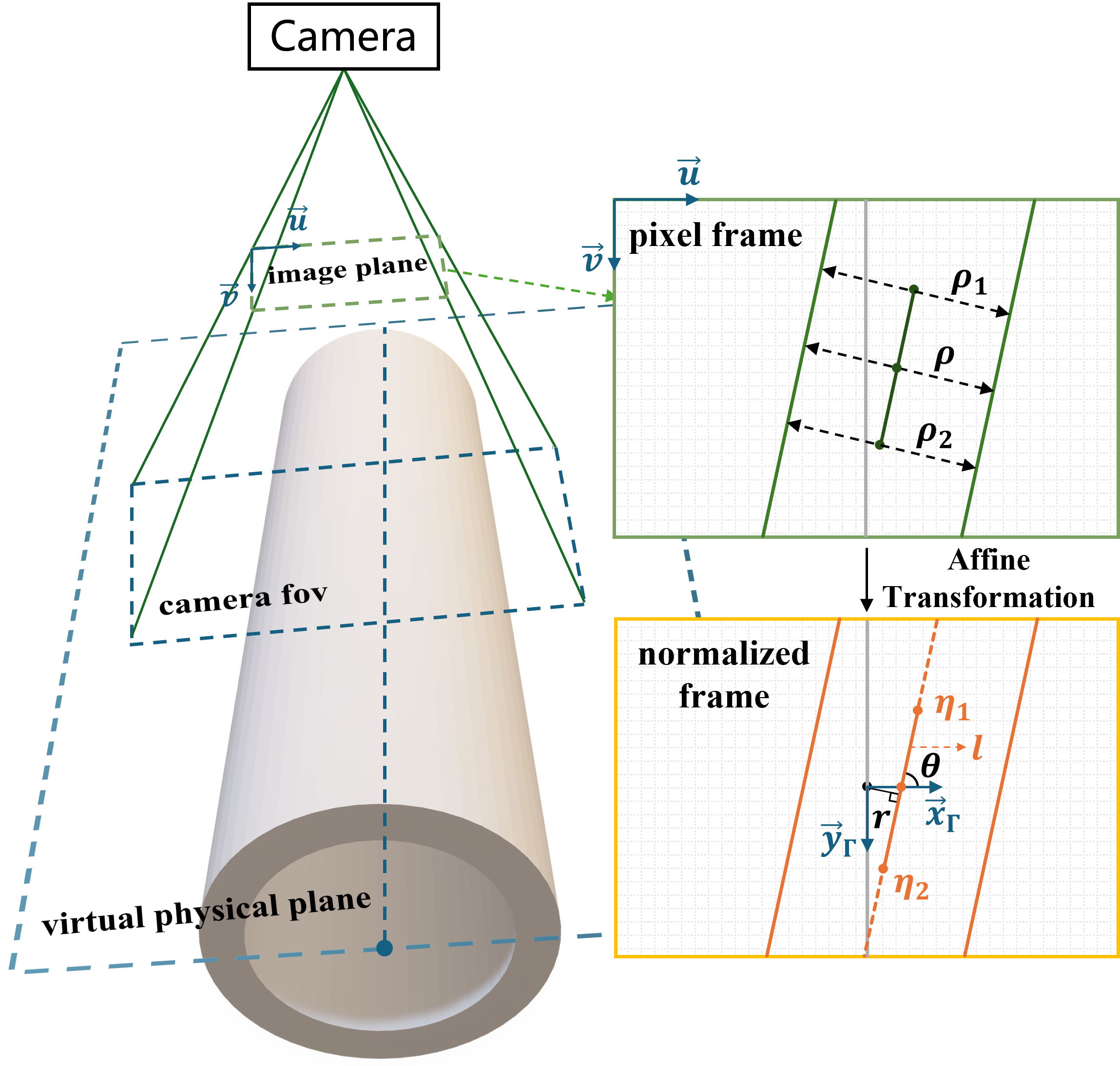}
    \caption{\textbf{Illustration of image features and modelling}, where blue dashed lines indicate the same virtual plane, and \(l\) denotes the corresponding projection on the normalized image plane.}
    \label{image model}
    \vspace{-0.2cm}
\end{figure}

In this work, the pipeline is approximated as a regular cylinder, and its two image-side boundaries are assumed to be reliably extracted by a visual detection module. The line \(l\) can therefore be interpreted as the projection of the center line between the two detected boundaries on a virtual plane parallel to the normalized image plane, as shown in Fig.~\ref{image model}. Based on the known pipeline diameter \(D\) and the observed image width, the depth of a feature point can be estimated from the apparent scale. 
The depth of \(\bm{\eta}_1\) is given by
\begin{equation}
\label{eq:depth_est}
z_{\eta_1}=f_u\cdot\frac{D}{\rho_1},
\end{equation}
where \(z_{\eta_1}\) denotes the estimated depth of \(\bm{\eta}_1\), \(D\) is the actual pipe diameter, and \(\rho_1\) denotes the detected pipeline width in the image frame at the point corresponding to $\bm{\eta}_1$. The depth of \(\bm{\eta}_2\) is obtained analogously.

Following~\cite{velasco2024visual}, the Jacobian matrix \(\bm{J}^{\bm{\chi}}_{{}^{C}\bm{\nu}}\in\mathbb{R}^{2\times 6}\), which relates the image feature \(\bm{\chi}\) to the camera velocity \({}^{C}\bm{\nu}\), is derived using the chain rule as
\begin{equation}
\label{eq:chi_cam_jacobian}
\bm{J}^{\bm{\chi}}_{{}^{C}\bm{\nu}}
=
\bm{J}^{\bm{\chi}}_{\bm{\eta}}
\bm{J}^{\bm{\eta}}_{{}^{C}\bm{\nu}},
\end{equation}
where \(\bm{\eta}=[\bm{\eta}_1^\mathsf{T};\bm{\eta}_2^\mathsf{T}]\in\mathbb{R}^{4}\), \(\bm{J}^{\bm{\chi}}_{\bm{\eta}}\in\mathbb{R}^{2\times4}\) relates \(\bm{\chi}\) to \(\bm{\eta}\), and \(\bm{J}^{\bm{\eta}}_{{}^{C}\bm{\nu}}\in\mathbb{R}^{4\times6}\) is the stacked interaction matrix of the two feature points~\cite{chaumette2006visual}. To connect the image feature dynamics to the vehicle motion, define
$\bm{\nu}=\left[\bm{v}^{B};{}^{B}\bm{\omega}\right]\in\mathbb{R}^{6},$ then the image kinematics can be written as
\begin{equation}
\label{image kinematics}
\dot{\bm{\chi}}=\bm{J}\bm{\nu},
\end{equation}
where
$\bm{J}=\bm{J}^{\bm{\chi}}_{{}^{C}\bm{\nu}}\bm{T}^{-1},$ and \(\bm{T}\in\mathbb{R}^{6\times6}\) is the transformation matrix between the body velocity and camera velocity,
\begin{equation}
\label{eq:T_transform}
\bm{T}=
\left[
\begin{array}{cc}
\bm{R}_{WB}\bm{R}_{BC} & [\bm{t}_{BC}]_{\times}\bm{R}_{BC} \\
\bm{0}_{3\times3} & \bm{R}_{BC}
\end{array}
\right],
\end{equation}
where \(\bm{R}_{BC}\in\mathcal{SO}(3)\) and \(\bm{t}_{BC}\in\mathbb{R}^{3}\) denote the orientation and position of the camera frame \(C\) with respect to the body frame \(B\), respectively, and \([\bm{t}_{BC}]_{\times}\) is the skew-symmetric matrix of \(\bm{t}_{BC}\).

\subsection{Unified Modeling}
\label{unified model}

Based on the quadrotor model in Section~\ref{Qdynamics} and the image kinematics in Section~\ref{Ikinematics}, a unified predictive model is constructed for control design. By collecting the image features, translational velocity, and attitude states, the overall system is written as
\begin{equation}
\label{system model}
\dot{\bm{x}}=
\left[
\begin{array}{c}
\dot{\bm{\chi}} \\
\dot{\bm{v}}^{B} \\
\dot{\bm{\psi}}^{B}
\end{array}
\right]
=
\underbrace{
\left[
\begin{array}{c}
\vspace{0.2em}
\bm{J}\left[\bm{v}^{B};{}^{B}\bm{\omega}\right] \\
\vspace{0.4em}
\bm{R}_{WB}\dfrac{{}^{B}\bm{c}}{m}+\bm{g} \\
\bm{M}\,{}^{B}\bm{\omega}
\end{array}
\right]
}_{\bm{f}(\bm{x},\bm{u})}
+\bm{E}\bm{d},
\end{equation}
where $\bm{x}=\left[\bm{\chi};\bm{v}^{B};\bm{\psi}^{B}\right]\in\mathbb{R}^{8},~
\bm{u}=\left[{}^{B}\bm{\omega};c\right]\in\mathbb{R}^{4},$
\({}^{B}\bm{c}=(0,0,c)^\mathsf{T}\) is the collective thrust vector, \(\bm{d}\in\mathbb{R}^{5}\) is the lumped disturbance vector, and \(\bm{E}\in\mathbb{R}^{8\times5}\) is a constant disturbance distribution matrix.

The disturbance term \(\bm{d}\) is introduced to capture the lumped disturbances, including unmodeled aerodynamics, depth-estimation errors, camera-parameter mismatch, thrust-mapping uncertainty, and external disturbances such as wind. These uncertainties mainly affect the image-feature channels and the thrust-related translational dynamics. Accordingly, \(\bm{E}\) is chosen as $\bm{E}=
\begin{bmatrix}
\bm{I}_{5\times5}; 
\bm{0}_{3\times5}
\end{bmatrix},$
with \(\bm{I}_{5\times5}\) being the \(5\times5\) identity matrix. Since the lumped disturbance \(\bm{d}\) is unavailable for predictive control, its estimate \(\hat{\bm{d}}\), obtained from the prediction-enhanced ESKF introduced in Section~\ref{ESKF section}, is used in the control-oriented prediction model:
\begin{equation}
\label{final system model}
\dot{\bm{x}}=\bm{f}(\bm{x},\bm{u})+\bm{E}\hat{\bm{d}}.
\end{equation}


\section{ESKF with Image Feature Prediction}\label{ESKF section}

\begin{figure*}[htbp!]
    \centering
    \includegraphics[width=0.75\linewidth]{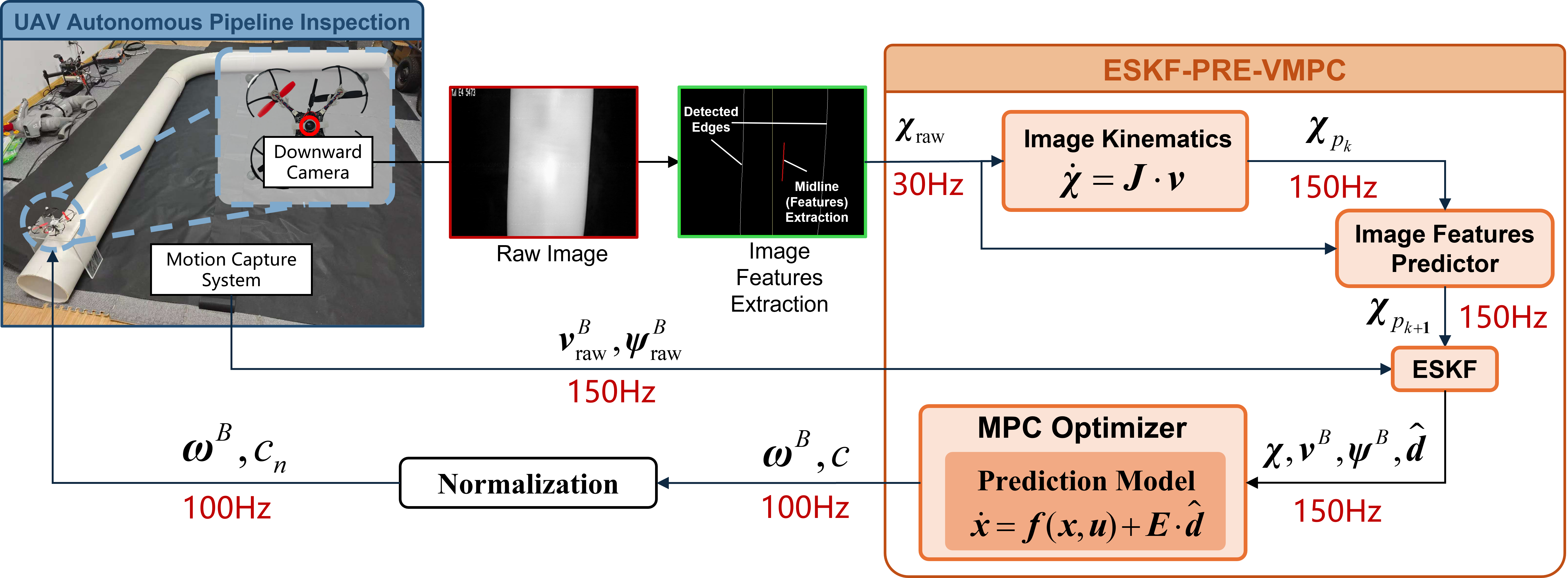}
    \caption{\textbf{Control framework of the proposed ESKF-PRE-VMPC for UAV autonomous pipeline inspection}. The framework integrates image feature extraction and prediction, state and lumped-disturbance estimation, and model predictive control. A downward-facing camera captures pipeline images, from which the pipeline edges and the midline are extracted. To address the low sampling rate of visual measurements, an image feature predictor based on image kinematics operates at the quadrotor state-measurement frequency. The predicted features are fused with available measurements in the ESKF to enable robust high-frequency estimation, providing timely state information, including image features, quadrotor states, and lumped disturbances, for MPC.}
    \label{fig:placeholder}
    \vspace{-0.4cm}
\end{figure*}

\subsection{ESKF Formulation}

Following Section~\ref{unified model}, disturbances are considered in the first five channels. Consequently, the subsystem associated with the first five states in \eqref{system model} can be rewritten as the following linear time-varying system:
\begin{equation}
\label{sub model}
\left\{
\begin{aligned}
\dot{\Tilde{\bm{x}}} &= \bm{A}_s \Tilde{\bm{x}} + \bm{B}_s \bm{u} + \bm{G} g + \bm{d},\\
\bm{y} &= \bm{C}\Tilde{\bm{x}},
\end{aligned}
\right.
\end{equation}
where \(\Tilde{\bm{x}}=\left[\bm{\chi}; \bm{v}^{B}\right] \in \mathbb{R}^{5}\), and \(\bm{y}\in\mathbb{R}^{5}\) denotes the measurement output. The system matrices are given by
\begin{equation}
\begin{aligned}
\bm{A}_s &=
\begin{bmatrix}
\bm{0}_{2\times2} & \bm{J}_{(:,1:3)} \\
\bm{0}_{3\times2} & \bm{0}_{3\times3}
\end{bmatrix}_{5\times5}, \qquad
\bm{G}=
\begin{bmatrix}
\bm{0}_{4\times1} \\
1
\end{bmatrix}_{5\times1},\\
\bm{B}_s &=
\begin{bmatrix}
\bm{J}_{(:,4:6)} & \bm{0}_{2\times1} \\
\bm{0}_{3\times3} & \dfrac{\bm{R}_{WB(:,3)}}{m}
\end{bmatrix}_{5\times4}, \qquad
\bm{C}=\bm{I}_{5\times5}.
\end{aligned}
\end{equation}

By incorporating the lumped disturbance into the extended-state representation and accounting for both process noise and measurement noise, \eqref{sub model} can be discretized as
\begin{equation}
\label{ESKF model}
\left\{
\begin{aligned}
\Bar{\bm{x}}_{k+1} &= \Bar{\bm{A}}_k \Bar{\bm{x}}_k + \Bar{\bm{B}}_k \bm{u}_k + \Bar{\bm{G}} g + \bm{L}\bm{D}_k + \bm{W}\bm{w}_k, \\
\bm{y}_k &= \Bar{\bm{C}} \Bar{\bm{x}}_k + \bm{n}_k,
\end{aligned}
\right.
\end{equation}
\begin{equation}\nonumber
\begin{aligned}
    \Bar{\bm{x}}_{k} &= \begin{bmatrix}
    \Tilde{\bm{x}}_k \\
    \bm{d}_k
    \end{bmatrix}\in\mathbb{R}^{10}, \
    \Bar{\bm{A}}_k =\begin{bmatrix}
    \bm{I}+{t}_s\bm{A_s} & t_s\bm{I} \\
    \bm{0} & \bm{I}
\end{bmatrix}_{10\times10}, \\[0.5em]
\Bar{\bm{B}}_k&=\begin{bmatrix}
    t_s\bm{B_s} \\
    \bm{0}
\end{bmatrix}_{10\times4},\
\Bar{\bm{G}}=\begin{bmatrix}
    t_s\bm{G} \\
    \bm{0}
\end{bmatrix}\in\mathbb{R}^{10},\
\bm{L}=\begin{bmatrix}
    \bm{0} \\
    \bm{I}
\end{bmatrix}_{10\times5},\\[0.5em]
\bm{W}&=\begin{bmatrix}
    \bm{I} \\
    \bm{0}
\end{bmatrix}_{10\times5},
\bm{D}_k=\bm{d}_{k+1}-\bm{d}_k,\,
\Bar{\bm{C}}=\begin{bmatrix}
    \bm{C} & \bm{0}
\end{bmatrix}_{5\times10}.
\end{aligned}
\end{equation}
where \(\bm{w}_k\in\mathbb{R}^{5}\) and \(\bm{n}_k\in\mathbb{R}^{5}\) denote the process noise and measurement noise, respectively. The measurement vector \(\bm{y}_k\) is formed from the available image features and the measured linear-velocity states at time \(k\).

To construct the ESKF as described in \cite{bai2018extended}, the following assumptions are made.

\begin{assumption}
\(\bm{w}_k\) and \(\bm{n}_k\) are uncorrelated zero-mean white noise processes, with \(E(\bm{n}_k\bm{n}_k^\mathsf{T})\leq\bm{R}_k\) and \(E(\bm{w}_k\bm{w}_k^\mathsf{T})\leq\bm{S}_k\), where \(\bm{R}_k\in\mathbb{R}^{5\times5}\) is the measurement-noise covariance and \(\bm{S}_k\in\mathbb{R}^{5\times5}\) is the process-noise covariance.
\end{assumption}

\begin{assumption}
\label{observability}
\((\Bar{\bm{A}}_k,\Bar{\bm{C}})\) is uniformly observable.
\end{assumption}

\begin{assumption}
\(\bm{P}_0\) is the initial state-estimation error covariance matrix, satisfying
\(E\!\left([\Bar{\bm{x}}_0-\hat{\Bar{\bm{x}}}_0][\Bar{\bm{x}}_0-\hat{\Bar{\bm{x}}}_0]^\mathsf{T}\right)\leq\bm{P}_0\),
where \(\hat{\Bar{\bm{x}}}_k=[\hat{\Tilde{\bm{x}}}_k;\hat{\bm{d}}_k]\) is the estimate of \(\Bar{\bm{x}}_k\).
\end{assumption}

\begin{assumption}
\(E(\bm{D}_{k,i}^{2})\leq q_{k,i}\), \(i=1,2,\ldots,5\), \(\forall k\geq0\), where \(q_{k,i}\) is uniformly bounded.
\end{assumption}

\begin{remark}
Assumption~\ref{observability} holds under normal operation, where the feature Jacobian \(\bm{J}\) remains full-rank, the image features \(\bm{\chi}\) provide continuous feedback, and the UAV motion provides persistent excitation.
\end{remark}

Following the extended-state representation, the ESKF algorithm is constructed based on \cite{bai2018extended} as
\begin{equation}
\label{ESKF}
\begin{aligned}
\hat{\Bar{\bm{x}}}_{k+1} &= \Bar{\bm{A}}_k \hat{\Bar{\bm{x}}}_k + \Bar{\bm{B}}_k \bm{u}_k + \Bar{\bm{G}} g + \bm{L}\hat{\bm{D}}_k - \bm{K}_k \big(\bm{y}_k - \Bar{\bm{C}} \hat{\Bar{\bm{x}}}_k\big), \\
\bm{K}_k &= -\Bar{\bm{A}}_k \bm{P}_k \Bar{\bm{C}}^\mathsf{T}
\left( \Bar{\bm{C}} \bm{P}_k \Bar{\bm{C}}^\mathsf{T} + (1+\alpha)^{-1} \bm{R}_k \right)^{-1}, \\
\bm{P}_{k+1} &= (1+\alpha)(\Bar{\bm{A}}_k + \bm{K}_k \Bar{\bm{C}})\bm{P}_k(\Bar{\bm{A}}_k + \bm{K}_k \Bar{\bm{C}})^\mathsf{T} \\
&\quad + \bm{K}_k \bm{R}_k \bm{K}_k^\mathsf{T} + \bm{Q}_{1,k} + \bm{Q}_{2,k}, \\
\bm{Q}_{1,k} &=
\begin{bmatrix}
\bm{0}_{5\times5} & \bm{0}_{5\times5} \\
\bm{0}_{5\times5} & 4\bm{Q}_k
\end{bmatrix}, \qquad
\bm{Q}_{2,k} =
\begin{bmatrix}
\bm{S}_k & \bm{0}_{5\times5} \\
\bm{0}_{5\times5} & \bm{0}_{5\times5}
\end{bmatrix},\\
\bm{Q}_k &= 5\cdot\mathrm{diag}([q_{k,1},\ldots,q_{k,5}]),
\end{aligned}
\end{equation}
\begin{equation}\nonumber
\alpha=\sqrt{\frac{\operatorname{tr}(\bm{Q}_{1,0})}{\operatorname{tr}(\bm{P}_0)}},
\qquad
\hat{\bm{D}}_{k}\triangleq\bm{0}.
\end{equation}
\(\hat{\bm{D}}_{k}\) denotes the estimate of \(\bm{D}_k\), and \(\bm{K}_k\) is the Kalman gain.

\begin{remark}
No prior knowledge of the lumped disturbance \(\bm{d}_k\) is assumed, leading to the nominal model \(\bm{d}_{k+1}-\bm{d}_{k}=\bm{0}\). If additional information about \(\bm{d}_k\) is available, a more refined nominal model can be adopted to improve the accuracy of \(\hat{\bm{D}}_{k}\). More detailed derivations can be found in \cite{bai2018extended}.
\end{remark}

\begin{lemma}
\label{asymptotically unbiased theorem}
If the lumped disturbance \(\bm{d}_k\) is constant, i.e., \(\bm{d}_{k+1}-\bm{d}_{k}=\bm{0}\), then the ESKF in \eqref{ESKF} provides an asymptotically unbiased minimum-variance estimate of \(\Bar{\bm{x}}_k\), 
\[
\lim_{k\to\infty}E(\Bar{\bm{e}}_{k}\Bar{\bm{e}}_{k}^{\mathsf{T}})=\bm{P}_{\infty},\qquad
\lim_{k\to\infty}E(\Bar{\bm{e}}_{k})=\bm{0},
\]
where \(\Bar{\bm{e}}_k=\Bar{\bm{x}}_k-\hat{\Bar{\bm{x}}}_k\) is the estimation error, and \(\bm{P}_{\infty}=\lim_{k\to\infty}\bm{P}_k\) is a constant positive definite matrix.
\end{lemma}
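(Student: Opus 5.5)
The plan is to derive the estimation-error recursion for the ESKF in \eqref{ESKF} under the constant-disturbance hypothesis. Then we show that its mean converges to zero and that its covariance is governed by the Riccati-type recursion for \(\bm{P}_k\), which converges to a positive definite limit.

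\textbf{Error recursion.} First, subtract the estimator in \eqref{ESKF} from the model \eqref{ESKF model}, using \(\bm{D}_k=\bm{0}\) and \(\hat{\bm{D}}_k=\bm{0}\). This gives
\[
\Bar{\bm{e}}_{k+1}=(\Bar{\bm{A}}_k+\bm{K}_k\Bar{\bm{C}})\Bar{\bm{e}}_k+\bm{W}\bm{w}_k+\bm{K}_k\bm{n}_k ,
\]
where we have used \(\bm{y}_k-\Bar{\bm{C}}\hat{\Bar{\bm{x}}}_k=\Bar{\bm{C}}\Bar{\bm{e}}_k+\bm{n}_k\) and the sign convention \(-\bm{K}_k\) in the update.

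\textbf{Mean.} Taking expectations and using the zero-mean noise assumption yields \(E(\Bar{\bm{e}}_{k+1})=(\Bar{\bm{A}}_k+\bm{K}_k\Bar{\bm{C}})E(\Bar{\bm{e}}_k)\). Asymptotic unbiasedness therefore reduces to exponential stability of the closed-loop transition matrix \(\Bar{\bm{A}}_k+\bm{K}_k\Bar{\bm{C}}\). I would establish this by the standard Kalman-filter Lyapunov argument, taking \(V_k=E(\Bar{\bm{e}}_k)^\mathsf{T}\bm{P}_k^{-1}E(\Bar{\bm{e}}_k)\).

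\textbf{Covariance.} Next, compute \(E(\Bar{\bm{e}}_{k+1}\Bar{\bm{e}}_{k+1}^\mathsf{T})\), using that \(\bm{w}_k\), \(\bm{n}_k\) and \(\Bar{\bm{e}}_k\) are mutually uncorrelated. This gives the sandwich term \((\Bar{\bm{A}}_k+\bm{K}_k\Bar{\bm{C}})E(\cdot)(\cdot)^\mathsf{T}\), plus \(\bm{W}\bm{S}_k\bm{W}^\mathsf{T}=\bm{Q}_{2,k}\) and \(\bm{K}_k\bm{R}_k\bm{K}_k^\mathsf{T}\). By induction from Assumption 3, this covariance is upper bounded by \(\bm{P}_k\), since the \(\bm{P}_k\) recursion dominates it: \((1+\alpha)\ge1\) and \(\bm{Q}_{1,k}\succeq0\). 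The factor \((1+\alpha)\) and the term \(\bm{Q}_{1,k}\) come from Young's inequality \((a+b)(a+b)^\mathsf{T}\le(1+\alpha)aa^\mathsf{T}+(1+\alpha^{-1})bb^\mathsf{T}\), which is applied to the \(\bm{D}_k\) contribution in the general case. When \(\bm{D}_k=\bm{0}\), this over-bound is exactly the inflated recursion.

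\textbf{Minimum variance.} To identify the gain as optimal, differentiate \(\operatorname{tr}(\bm{P}_{k+1})\) with respect to \(\bm{K}_k\) and set the derivative to zero. This gives \(\bm{K}_k=-\Bar{\bm{A}}_k\bm{P}_k\Bar{\bm{C}}^\mathsf{T}(\Bar{\bm{C}}\bm{P}_k\Bar{\bm{C}}^\mathsf{T}+(1+\alpha)^{-1}\bm{R}_k)^{-1}\), matching \eqref{ESKF}. Substituting this gain back turns the \(\bm{P}_k\) recursion into a Riccati difference equation with inflated state matrix \(\sqrt{1+\alpha}\,\Bar{\bm{A}}_k\). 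I would then invoke the result of \cite{bai2018extended} to conclude that the recursion is minimum-variance in the sense of minimizing this upper bound.

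\textbf{Convergence (main obstacle).} The hardest step is proving that \(\bm{P}_k\) converges to a constant \(\bm{P}_\infty\succ0\) and that the closed-loop matrix is exponentially stable, since the system is time-varying. First, I would use uniform observability (Assumption~\ref{observability}) to obtain a uniform upper bound on \(\bm{P}_k\). Second, the uniform positivity of \(\bm{S}_k\) in the velocity and feature block, propagated through the disturbance integrator in \(\Bar{\bm{A}}_k\), gives uniform controllability from the noise, and hence a uniform lower bound \(\bm{P}_k\succeq p\bm{I}\). Together these make \(V_k\) a valid Lyapunov function, and the usual inequality \(V_{k+1}\le(1-\epsilon)V_k\) gives \(E(\Bar{\bm{e}}_k)\to\bm{0}\). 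Third, for existence of the limit \(\bm{P}_\infty\), I would appeal to the standard Riccati convergence theorem, under the assumption (implicit in the statement) that \(\Bar{\bm{A}}_k\), \(\bm{R}_k\) and \(\bm{S}_k\) settle to constants. Finally, since the true error covariance is sandwiched by the \(\bm{P}_k\) recursion and both are driven by the same stable dynamics, its limit coincides with \(\bm{P}_\infty\).
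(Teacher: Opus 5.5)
The paper gives no argument of its own here; it only cites \cite{bai2018extended}, so your reconstruction has to stand on its own. Several pieces are correct: the error recursion, the induction $E(\Bar{\bm{e}}_k\Bar{\bm{e}}_k^{\mathsf T})\preceq\bm{P}_k$, the gain as minimizer of $\operatorname{tr}(\bm{P}_{k+1})$, and the Riccati rewrite with $\sqrt{1+\alpha}\,\Bar{\bm{A}}_k$.

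\textbf{First gap: there is no sandwich.} You have only an \emph{upper} bound on the true covariance. With $\bm{F}_k=\Bar{\bm{A}}_k+\bm{K}_k\Bar{\bm{C}}$ and $\bm{\Delta}_k=\bm{P}_k-E(\Bar{\bm{e}}_k\Bar{\bm{e}}_k^{\mathsf T})$, the two recursions give $\bm{\Delta}_{k+1}\succeq\bm{F}_k\bm{\Delta}_k\bm{F}_k^{\mathsf T}+\alpha\bm{F}_k\bm{P}_k\bm{F}_k^{\mathsf T}+\bm{Q}_{1,k}\succeq\bm{Q}_{1,k}$. So for any $q_{k,i}>0$ the true covariance stays below $\bm{P}_k$ by at least $\bm{Q}_{1,k}$ and cannot converge to $\bm{P}_\infty$. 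In that case the gain also minimizes a bound, not the variance. The missing idea is to use $\bm{D}_k=\bm{0}$ to take $q_{k,i}=0$. Then $\bm{Q}_{1,k}=\bm{0}$ and $\alpha=\sqrt{\operatorname{tr}(\bm{Q}_{1,0})/\operatorname{tr}(\bm{P}_0)}=0$, and \eqref{ESKF} becomes exactly the standard Kalman filter of the augmented model. With exact $\bm{R}_k,\bm{S}_k,\bm{P}_0$ you then get $E(\Bar{\bm{e}}_k\Bar{\bm{e}}_k^{\mathsf T})=\bm{P}_k$ and genuine minimum variance.

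\textbf{Second gap: that repair breaks your convergence step.} Since $\bm{W}=[\bm{I};\bm{0}]$, the noise $\bm{w}_k$ never reaches the disturbance block. The $t_s\bm{I}$ block of $\Bar{\bm{A}}_k$ feeds $\bm{d}$ into $\Tilde{\bm{x}}$, not the other way round. Hence $\bm{d}_{k+1}=\bm{d}_k$ is an unexcited unit-modulus mode, and $(\Bar{\bm{A}}_k,\bm{W}\bm{S}_k^{1/2})$ is \emph{not} uniformly controllable, contrary to your claim. With $\bm{Q}_{1,k}=\bm{0}$, the following all fail:
\begin{enumerate}
\item[(i)] The disturbance block of $\bm{P}_k$ tends to zero (roughly like $1/k$, as in least-squares estimation of a constant), so no bound $\bm{P}_k\succeq p\bm{I}$ exists.
\item[(ii)] The disturbance rows of $\bm{K}_k$ vanish, so $\bm{F}_k$ approaches a matrix with eigenvalue $1$ and no uniform contraction $V_{k+1}\le(1-\epsilon)V_k$ holds.
\item[(iii)] The standard Riccati convergence theorem, which needs stabilizability, does not apply.
\item[(iv)] Any limit $\bm{P}_\infty$ is singular.
\end{enumerate}
Conversely, if you keep $q_{k,i}>0$, the lower bound is immediate from $\bm{P}_{k+1}\succeq\bm{Q}_{1,k}+\bm{Q}_{2,k}$. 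Also $\bm{P}_{k+1}\succeq(1+\alpha)\bm{F}_k\bm{P}_k\bm{F}_k^{\mathsf T}$ gives $V_{k+1}\le V_k/(1+\alpha)$, i.e.\ exponential unbiasedness. But then the first gap applies.

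\textbf{What can actually be proved.} No single tuning yields all the conclusions, so you must settle for one of two weaker results:
\begin{enumerate}
\item[(a)] Un-inflated filter: $E(\Bar{\bm{e}}_k)\to\bm{0}$, proved by a non-uniform, information-type argument, and $E(\Bar{\bm{e}}_k\Bar{\bm{e}}_k^{\mathsf T})=\bm{P}_k$ with a limit that is only positive semidefinite.
\item[(b)] Inflated filter: exponential unbiasedness with $E(\Bar{\bm{e}}_k\Bar{\bm{e}}_k^{\mathsf T})\preceq\bm{P}_k\to\bm{P}_\infty\succ\bm{0}$.
\end{enumerate}
The lemma as written overreaches, and the bare citation does not resolve this. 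Separately, your added hypothesis that $\bm{R}_k$ settles to a constant conflicts with the paper's switching between $\bm{R}_0$ and $\bm{R}_1$.
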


\begin{proof}
The proof of Lemma ~\ref{asymptotically unbiased theorem} is detailed in \cite{bai2018extended}.
\end{proof}

\begin{remark}
Although Lemma ~\ref{asymptotically unbiased theorem} is established under the assumption of constant lumped disturbance, the ESKF remains effective in practice under slowly time-varying uncertainties, which makes it suitable for most real-world scenarios.
\end{remark}

\vspace{-0.1cm}
\subsection{Image Feature Prediction}

Due to the low camera update rate, the extracted image features cannot provide high-frequency measurements to the ESKF, resulting in low-frequency state estimation. To address this issue, open-loop prediction of the image features is first performed using the computed feature Jacobian and high-frequency state information. Specifically, the high-rate state-measurement timestamps are used as the reference. When no new image is received or the visual detection is unreliable, the image features are predicted; when a new reliable image becomes available, it is used to correct the predicted features:
\begin{equation}
\label{image prediction}
\bm{\chi}_{p,k+1}=
\begin{cases}
\bm{\chi}_{k+1}, & k\in\{k_j\}, \\
\bm{\chi}_{p,k} + T_s \bm{J}_k \bm{\nu}_{k}, &\text{otherwise},\\
\end{cases}
\end{equation}
where \(\bm{\chi}_{p,k}\) is the prediction of \(\bm{\chi}\) at time step \(k\), and \(\bm{\nu}_k\) is obtained from the available high-rate state measurements and control inputs. The Jacobian matrix \(\bm{J}_k\) is updated only when new image data are received and is assumed to remain constant between two consecutive image frames. This prediction and estimation process is illustrated in Fig.~\ref{prediction}.
\begin{figure}[htbp!]
    \centering
    \includegraphics[width=0.9\linewidth]{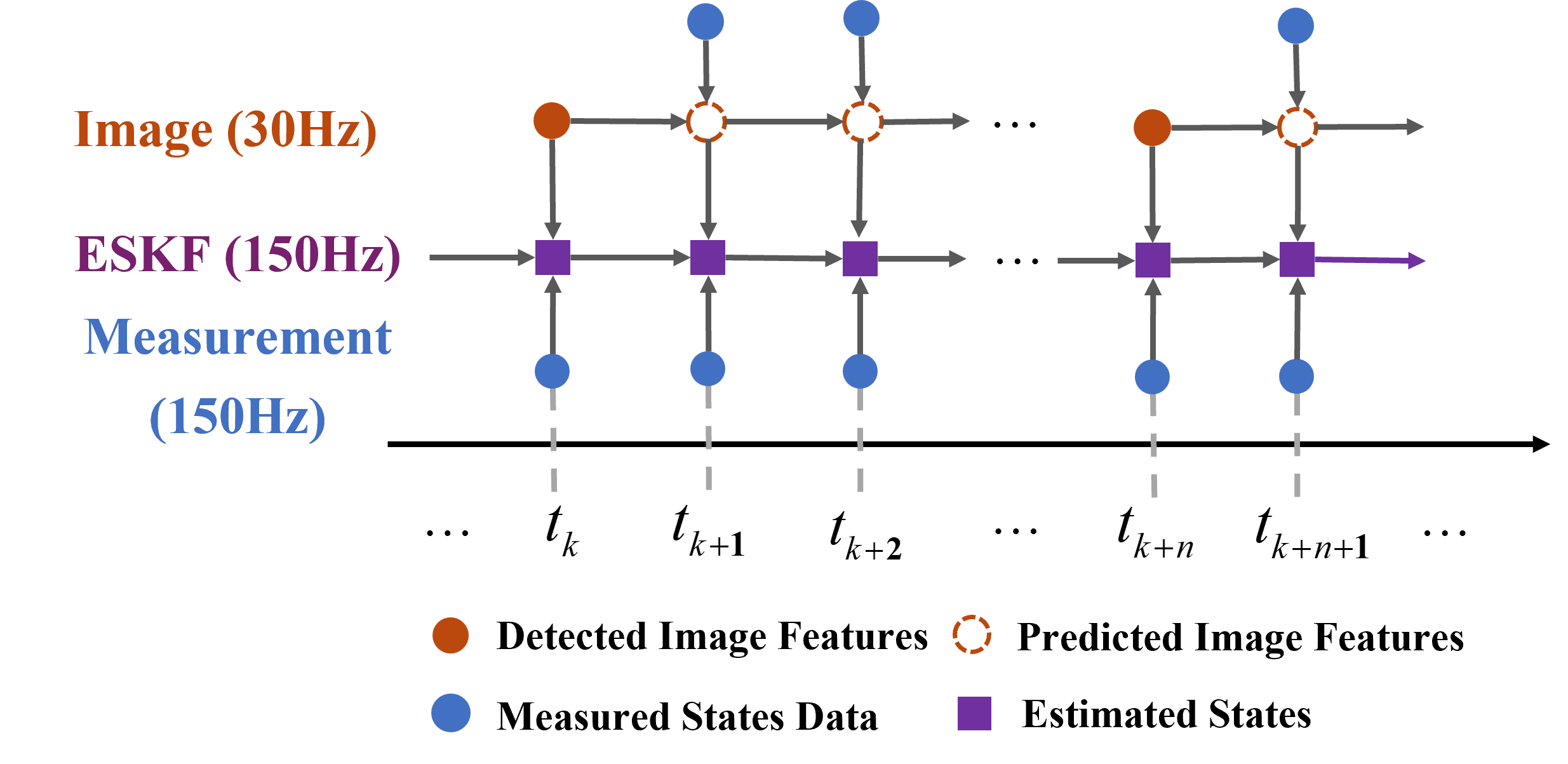}
    \caption{\textbf{ESKF estimator} with image feature prediction.}
    \label{prediction}
    \vspace{-0.1cm}
\end{figure}

\begin{remark}
Although the image features are first predicted in an open-loop manner, these predictions are treated as pseudo-measurements in the ESKF. The final image features used by the MPC are obtained from the ESKF estimates, thereby forming a closed-loop image-feature estimation scheme.
\end{remark}

Specifically, the measurement-noise covariance \(\bm{R}_k\) in the ESKF is designed to be adaptive with respect to the source of visual features. For features directly detected from the current image, a lower noise level is assigned, whereas for predicted features, a higher noise level is used:
\begin{equation}
\bm{R}_k=
\begin{cases}
\bm{R}_0, & \bm{\chi}_{p,k+1}=\bm{\chi}_{k+1},\\
\bm{R}_1, & \bm{\chi}_{p,k+1}=\bm{\chi}_{p,k}+T_s\bm{J}_k\bm{\nu}_{k},
\end{cases}
\quad \bm{R}_1 \succeq \bm{R}_0.
\end{equation}

\section{Visual Servoing MPC}

\subsection{VMPC Design}

The VMPC controller computes the control commands by solving a finite-horizon optimal control problem. The primary objective is to perform visual servoing while maintaining the desired inspection velocity. To this end, an intermediate state is defined as
\begin{equation}
    v_h = \cos(\alpha)\, v_x^B + \sin(\alpha)\, v_y^B,
\end{equation}
which represents the horizontal velocity component along the UAV heading direction. In addition, the pipeline should remain at an appropriate size in the image to support stable and high-quality inspection. The vertical velocity \(v_z^B\) is used to regulate the apparent pipeline size in the image. Accordingly, the VMPC optimization state is chosen as
\begin{equation}
    \bm{x}_o =
    \begin{bmatrix}
        \bm{\chi};
        v_h;
        v_z^B
    \end{bmatrix}.
\end{equation}

To reduce the risk of IBVS divergence caused by positive feedback~\cite{ozawa2011dynamic}, it is desirable to keep the commanded body angular velocities, especially \({}^{B}\omega_x\) and \({}^{B}\omega_y\), as small as possible. Since the collective thrust is not directly optimized in the VMPC, the optimization input is defined as
\begin{equation}
    \bm{u}_o =
    \begin{bmatrix}
        {}^{B}\omega_x;
        {}^{B}\omega_y;
        {}^{B}\omega_z
    \end{bmatrix}.
\end{equation}

By discretizing the control-oriented prediction model in \eqref{final system model} using a fourth-order Runge--Kutta method with sampling time \(t_p\), and incorporating the disturbance estimate \(\hat{\bm{d}}_k\) provided by the ESKF, a discrete-time prediction model for VMPC is obtained as
\begin{equation}
    \bm{x}_{k+1}
    =
    \bm{f}_o\!\left(
    \bm{x}_{k}, \bm{u}_{k}; \hat{\bm{d}}_k
    \right).
\end{equation}

The nonlinear VMPC problem over the prediction horizon \(N\) is formulated as
\begin{equation}
\label{optimization}
\begin{aligned}
    \min_{\bm{u}_{o,0},\ldots,\bm{u}_{o,N-1}} \;\;
    & \sum_{i=0}^{N-1}
    \Big(
    \|\Delta\bm{x}_o(k+i|k)\|_{\bm{\Phi}}^2+
    \|\Delta\bm{u}_o(k+i|k)_{\bm{\Psi}}^2
    \Big)   \\
    & \qquad+\|\Delta\bm{x}_o(k+N|k)\|_{\bm{\Phi}_N}^2 \\
    \mathrm{s.t.}\;\;
     \bm{x}(k+i&+1|k)
    =
    \bm{f}_o\!\left(
    \bm{x}(k+i|k), \bm{u}(k+i|k); \hat{\bm{d}}_k
    \right),
\end{aligned}
\end{equation}
where $ \Delta\bm{x}_{o,k} = \bm{x}_{o,k} - \bm{x}_{o_d,k}, \Delta\bm{u}_{o,k} = \bm{u}_{o,k} - \bm{u}_{o_d,k}$, 
with \(\bm{x}_{o_d,k}\) and \(\bm{u}_{o_d,k}\) the desired state and input vectors, respectively. \(\bm{\Phi}\) and \(\bm{\Psi}\) are positive semi-definite weight matrices that specify the relative priorities of the optimization variables.


In this work, the desired input vector is set to \(\bm{u}_{o_d,k}=\bm{0}\), while the desired state vector is defined as
\begin{equation}
\label{xod}
    \bm{x}_{o_d,k}
    =
    \begin{bmatrix}
        \bm{\chi}_d;
        v_{h_d};
        v_{z_d,k}^B
    \end{bmatrix},
\end{equation}
where \(\bm{\chi}_d=(\theta_d,r_d)^{\mathsf T}=(\pi/2,0)^{\mathsf T}\) ensures that the pipeline remains vertically aligned and centered in the image, \(v_{h_d}\) is the desired inspection velocity, and \(v_{z_d,k}^B\) is designed as
\begin{equation}
\label{vzd}
    v_{z_d,k}^B
    =
    k_1\!\left(
    \frac{1}{\rho_d} - \frac{1}{\rho_k}
    \right)
    +
    k_2
    \frac{\rho_{f,k}-\rho_k}{c_v}
    v_{h_d},
\end{equation}
where \(\rho_d\) denotes the desired pipeline width at the image center, \(\rho_k\) and \(\rho_{f,k}\) denote the detected pipeline widths at the image center and image front at time step \(k\), respectively, and \(k_1\) and \(k_2\) are tunable parameters. The first term in \eqref{vzd} regulates the apparent pipeline size based on the inverse proportionality between flight altitude and pipeline width in the image. The second term accounts for sloped terrain by exploiting the width difference between the front and center of the image, thereby generating a velocity adjustment for both upward and downward slopes.

To analyze recursive feasibility and closed-loop stability, define
$\bm{e}_k = \Delta\bm{x}_{o,k}, \bm{\delta}_k = \Delta\bm{u}_{o,k}.$
Then the cost function in \eqref{optimization} can be rewritten as
\begin{equation}
\label{cost function}
\begin{aligned}
    J(k)
    =\;
    \sum_{i=0}^{N-1}
    &\Big(
    \|\bm{e}(k+i|k)\|_{\bm{\Phi}}^2
    +
    \|\bm{\delta}(k+i|k)\|_{\bm{\Psi}}^2
    \Big) \\
    &+
    \|\bm{e}(k+N|k)\|_{\bm{\Phi}_N}^2,
\end{aligned}
\end{equation}
subject to
\begin{equation}
    \bm{e}(k+i+1|k)
    =
    h\!\left(
    \bm{e}(k+i|k), \bm{\delta}(k+i|k)
    \right),
\end{equation}
where \(h\) denotes the virtual mapping that describes the error evolution under the corresponding optimization input.

\begin{assumption}
\label{terminal cost}
For any \(\bm{e}\), there exists a control input \(\bm{\delta}(\bm{e})\) such that
\begin{equation}
\label{terminal_ineq}
\begin{aligned}
    \|h(\bm{e}, \bm{\delta}(\bm{e}))\|_{\bm{\Phi}_N}^2
    \leq\;&
    \|\bm{e}\|_{\bm{\Phi}_N}^2
    - \|\bm{e}\|_{\bm{\Phi}}^2- \|\bm{\delta}(\bm{e})\|_{\bm{\Psi}}^2.
\end{aligned}
\end{equation}
\end{assumption}

\begin{theorem}
\label{stability}
If the optimization problem \eqref{optimization} is feasible at \(k=0\), then the VMPC controller is recursively feasible, and the closed-loop system is asymptotically stable.
\end{theorem}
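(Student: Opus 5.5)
The argument is the standard terminal-cost MPC argument (Mayne et al.): use the optimal cost $J^*(k)$ as a Lyapunov function and construct a feasible shifted candidate at time $k+1$. Since \eqref{optimization} has no state or input constraints beyond the prediction dynamics, and Assumption~\ref{terminal cost} holds globally, no terminal set is involved. We work in the nominal setting: the error dynamics $h$ describe the true evolution of $\bm{e}$, with the disturbance estimate $\hat{\bm{d}}_k$ treated as exact, justified asymptotically by Lemma~\ref{asymptotically unbiased theorem}.

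\textbf{Recursive feasibility.} Suppose at time $k$ an optimal sequence $\bm{\delta}^*(k|k),\ldots,\bm{\delta}^*(k+N-1|k)$ exists, with predicted errors $\bm{e}^*(k+i|k)$. Apply $\bm{\delta}^*(k|k)$. Under the nominal assumption, $\bm{e}(k+1)=\bm{e}^*(k+1|k)$. At time $k+1$ take the shifted candidate
\[
\tilde{\bm{\delta}}(k+i|k+1)=\bm{\delta}^*(k+i|k),\quad i=1,\ldots,N-1,
\]
and append $\tilde{\bm{\delta}}(k+N|k+1)=\bm{\delta}(\bm{e}^*(k+N|k))$, whose existence is given by Assumption~\ref{terminal cost}. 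This sequence satisfies the prediction constraint, so the problem is feasible at $k+1$. Induction from feasibility at $k=0$ gives feasibility for all $k$.

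\textbf{Cost decrease.} The candidate cost at $k+1$ equals $J^*(k)$ with three changes. The stage term at $i=0$, namely $\|\bm{e}(k)\|_{\bm{\Phi}}^2+\|\bm{\delta}^*(k|k)\|_{\bm{\Psi}}^2$, is removed. The old terminal cost $\|\bm{e}^*(k+N|k)\|_{\bm{\Phi}_N}^2$ is replaced by the stage cost at $\bm{e}^*(k+N|k)$ plus the new terminal cost $\|h(\bm{e}^*(k+N|k),\bm{\delta}(\cdot))\|_{\bm{\Phi}_N}^2$. By \eqref{terminal_ineq}, the added terms are at most the removed terminal term. Optimality then gives
\[
J^*(k+1)\le J^*(k)-\|\bm{e}(k)\|_{\bm{\Phi}}^2-\|\bm{\delta}^*(k|k)\|_{\bm{\Psi}}^2 .
\]
Since $J^*\ge0$ is nonincreasing, summing yields $\sum_k\|\bm{e}(k)\|_{\bm{\Phi}}^2<\infty$, hence $\|\bm{e}(k)\|_{\bm{\Phi}}\to0$. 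If $\bm{\Phi}\succ0$, this gives $\bm{e}(k)\to\bm{0}$, i.e., convergence to $\bm{x}_{o_d}$.

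\textbf{Stability and main obstacle.} Lyapunov stability (not only convergence) needs $J^*$ to be bounded above and below by class-$\mathcal{K}$ functions of $\|\bm{e}\|$. The lower bound is $J^*(k)\ge\|\bm{e}(k)\|_{\bm{\Phi}}^2$. For the upper bound, apply $\bm{\delta}(\bm{e})$ from Assumption~\ref{terminal cost} over the whole horizon and telescope \eqref{terminal_ineq}, giving $J^*(k)\le\|\bm{e}(k)\|_{\bm{\Phi}_N}^2$. This is the step I expect to be delicate. The paper only states $\bm{\Phi}$ is positive semidefinite, so I would add (or read into the assumptions) $\bm{\Phi}\succ0$ on the regulated states, or fall back on a detectability argument. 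I would also state explicitly that the result is nominal, with the ESKF residual $\bm{d}-\hat{\bm{d}}$ vanishing in mean by Lemma~\ref{asymptotically unbiased theorem}, rather than claiming robust stability.
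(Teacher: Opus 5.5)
Your proposal is correct and follows the same route as the paper. The shifted sequence with $\bm{\delta}(\bm{e}^*(k+N|k))$ appended gives recursive feasibility under the nominal identification $\bm{e}(k+1|k+1)=\bm{e}^*(k+1|k)$, which the paper also uses, only implicitly. Assumption~\ref{terminal cost} then yields $J^*(k+1)-J^*(k)\le-\|\bm{e}^*(k|k)\|_{\bm{\Phi}}^2-\|\bm{\delta}^*(k|k)\|_{\bm{\Psi}}^2$ with $J^*$ as the Lyapunov function. Your extra steps make the paper's last line rigorous, since it asserts asymptotic stability from monotone decrease alone. Those steps are the summability argument, the bounds $\|\bm{e}\|_{\bm{\Phi}}^2\le J^*\le\|\bm{e}\|_{\bm{\Phi}_N}^2$ obtained by telescoping \eqref{terminal_ineq}, and the requirement $\bm{\Phi}\succ 0$.
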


\begin{proof}
The optimization problem \eqref{optimization} is equivalent to minimizing the cost function \eqref{cost function}:
\begin{equation}
\label{optimize J}
    \min_{\bm{\delta}(k|k),\ldots,\bm{\delta}(k+N-1|k)} J(k).
\end{equation}

Suppose that at time step \(k\), the optimal control sequence obtained by solving \eqref{optimize J} is
\(\bm{\delta}^*(k|k), \ldots, \bm{\delta}^*(k+N-1|k)\), with the corresponding state sequence
\(\bm{e}^*(k+1|k), \ldots, \bm{e}^*(k+N|k)\). The optimal cost \(J^*(k)\) is then
\begin{equation}
\begin{aligned}
    J^*(k)
    =\;&
    \sum_{i=0}^{N-1}
    \Big(
    \|\bm{e}^*(k+i|k)\|_{\bm{\Phi}}^2
    +
    \|\bm{\delta}^*(k+i|k)\|_{\bm{\Psi}}^2
    \Big) \\
    &+
    \|\bm{e}^*(k+N|k)\|_{\bm{\Phi}_N}^2.
\end{aligned}
\end{equation}

At time step \(k+1\), consider the shifted control sequence
\[
\bm{\delta}^*(k+1|k),\ldots,\bm{\delta}^*(k+N-1|k),
\bm{\delta}(\bm{e}^*(k+N|k)),
\]
with the corresponding state sequence
\[
\bm{e}^*(k+2|k),\ldots,\bm{e}^*(k+N|k),
h\!\left(
\bm{e}^*(k+N|k), \bm{\delta}(\bm{e}^*(k+N|k))
\right).
\]
Since \(\bm{e}(k+1|k+1)=\bm{e}^*(k+1|k)\), this provides a feasible candidate solution at time \(k+1\). Therefore, if the optimization problem is feasible at \(k=0\), it remains feasible for all \(k>0\), which proves recursive feasibility.

Meanwhile, the cost corresponding to this feasible solution at \(k+1\) satisfies
\begin{equation}
\begin{aligned}
    J(k+1) &- J^*(k) 
    = \|h\big(\bm{e}^*(k+N|k), \bm{\delta}(\bm{e}^*(k+N|k))\big)\|_{\bm{\Phi}_N}^2 \\ 
    &+ \|\bm{\delta}(\bm{e}^*(k+N|k))\|_{\bm{\Psi}}^2 
    + \|\bm{e}^*(k+N|k)\|_{\bm{\Phi}}^2 \\
     &- \|\bm{e}^*(k+N|k)\|_{\bm{\Phi}_N}^2 
    - \|\bm{e}^*(k|k)\|_{\bm{\Phi}}^2 
    - \|\bm{\delta}^*(k|k)\|_{\bm{\Psi}}^2
\end{aligned}\label{costdiff}
\end{equation}
By Assumption~\ref{terminal cost}, \eqref{costdiff} yields
\begin{equation}
\label{decrease}
\begin{aligned}
    J^*(k+1)-J^*(k)
    &\leq J(k+1)-J^*(k) \\
    &\leq
    -\|\bm{e}^*(k|k)\|_{\bm{\Phi}}^2
    -\|\bm{\delta}^*(k|k)\|_{\bm{\Psi}}^2.
\end{aligned}
\end{equation}
By selecting \(J^*(k)\) as a Lyapunov function candidate, 
\eqref{decrease} implies that \(J^*(k)\) is monotonically decreasing, which ensures asymptotic stability of the closed-loop system.
\end{proof}

\section{Experimental Validation}
\label{exp}

The effectiveness of the proposed framework is validated via both high-fidelity Gazebo simulations and real-world experiments.
In all experiments, the MPC optimization problem is solved online using ACADO~\cite{houska2011acado} with the qpOASES solver~\cite{ferreau2014qpoases}. For a detailed visualization of the results, see the project webpage.
To isolate the impact of disturbance estimation and image feature prediction, three representative baselines are selected for comparison.

\subsection{Comparative Algorithms}

The proposed ESKF-PRE-VMPC framework for quadrotor visual servoing is compared against three baseline algorithms, which are summarized below for  clarity: 

\begin{enumerate}

    \item [(1)] \textbf{IBVS}: The classical IBVS is first considered, 
    without accounting for quadrotor dynamics. Due to the rank deficiency of the visual feature Jacobian matrix, a null-space projection operator is employed to facilitate tracking of the desired cruise velocity, as detailed in \cite{dietrich2015overview, velasco2024visual}. Control commands sent to the PX4 consist of 6-DOF velocities, including linear velocity \(\bm{v}^B\) and angular velocity \(^B\bm{\omega}\);
    
    \item [(2)] \textbf{IBVS-MPC}: In this approach, the MPC model only accounts for the image feature kinematics as described in Section~ \ref{Ikinematics} instead of the quadrotor dynamics, and similar to IBVS, the generated control commands directly provide 6-DOF velocity inputs to the autopilot, corresponding to the integrated VMPC-K method in Table~\ref{comparison}. 
    
    \item [(3)] \textbf{ESKF-VMPC}: This algorithm uses the same system model, state estimation and disturbance compensation strategy, and MPC problem formulation as the proposed ESKF-PRE-VMPC. However, it does not perform prediction or estimation of the image features, and thus updates only when new reliable image information is received.

\end{enumerate}

In the comparative experiments, the necessary parameters for control algorithm design and implementation are summarized in Table \ref{tab_exp_params}. 
\begin{table}[htbp!]
\vspace{-0.1cm}
\centering
\caption{Control parameters in experiments. (a) Parameters with the same values. (b) Parameters with different values, where S denotes simulations and R denotes real-world experiments.}
\label{tab_exp_params}
\renewcommand{\arraystretch}{1.1}
\resizebox{\columnwidth}{!}{%
\begin{tabular}{cc|cc}
\multicolumn{4}{c}{\textbf{(a) Shared parameters}}\\
\toprule
Parameter & Value & Parameter & Value \\
\midrule
\(N\) & 20 & \(t_p\) & 0.1 s \\
\(\rho_d\) & 300 pixels & \(\bm{R}_0\) & \(\mathrm{diag}(0.01,\,0.01,\,0.01,\,0.01,\,0.01)\) \\
\(\bm{R}_1\) & \(\mathrm{diag}(0.1,\,0.1,\,0.01,\,0.01,\,0.01)\) & \(\bm{S}_k\) & \(0.001\bm{I}_{5\times5}\) \\
\(\bm{P}_0\) & \(\mathrm{diag}(0.5\bm{I}_{5\times5},\,\bm{I}_{5\times5})\) & \(\bm{\Psi}\) & \(\mathrm{diag}(0.05,\,0.05,\,0.01)\) \\
\bottomrule
\end{tabular}}
\vspace{0.5em}
\resizebox{\columnwidth}{!}{%
\begin{tabular}{c|cc}
\multicolumn{3}{c}{\textbf{(b) Scenario-dependent parameters}}\\
\toprule
Parameter & Value (S) & Value (R) \\
\midrule
\(m\) & 1.5 kg & 0.044 kg \\
\(v_{h_d}\) & 0.5 m/s & 0.3 m/s \\
\(D\) & 1.5 m & 0.2 m \\
\(\bm{\Phi}\) & \(\mathrm{diag}(\phi_\theta,\,\phi_r,\,0.01,\,1)\) & \(\mathrm{diag}(0.3,\,0.1,\,0.01,\,0.1)\) \\
\(\bm{\Phi}_N\) & \(\bm{\Phi}\) & \(\bm{\Phi}\) \\
\bottomrule
\end{tabular}}
\vspace{-0.2cm}
\end{table}

\subsection{SITL simulation}

\subsubsection{Simulation setup}
We conducted Software-In-The-Loop (SITL) simulations in a high-fidelity \href{https://classic.gazebosim.org/}{Gazebo} environment integrated with PX4 and ROS. A detailed pipeline model, as shown in the bottom plot of Fig.~\ref{3D model}, was carefully designed to include various practical scenarios, such as straight and curved segments, uphill and downhill segments, representing the real-world exemplary pipelines (top plot of Fig.~\ref{3D model}). 
\begin{figure}[htbp!]
    \centering
    \includegraphics[width=0.8\linewidth, height=0.3\linewidth]{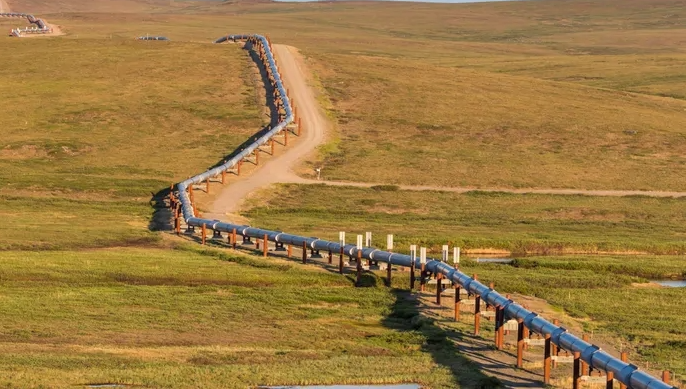}
    \includegraphics[width=0.8\linewidth, height=0.38\linewidth]{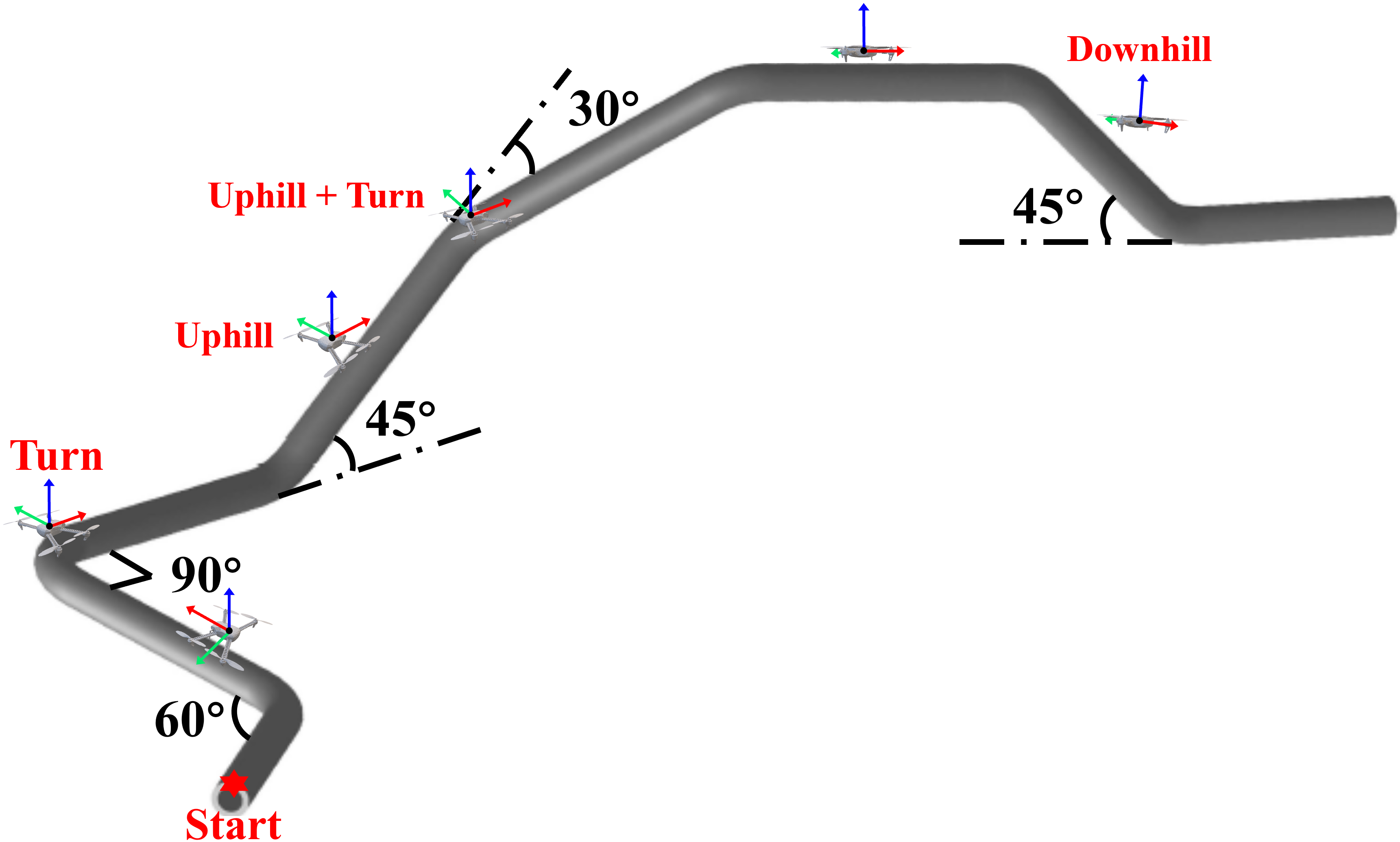}
    \caption{\textbf{Pipeline model} is composed of eight segments with each 8 meters long. Top: an exemplary real-world \href{https://www.spiegel.de/wissenschaft/natur/alaskas-umstrittene-bohrprojekte-biden-versucht-s-noch-mal-a-a1785085-0cfa-4805-8406-4fbdb3f32c1f}{pipeline}; Bottom: schematic of the 3D pipeline in this work, showing the curvature and slope of each segment.}
    \label{3D model}
    \vspace{-0.1cm}
\end{figure}

The quadrotor physics model is based on the \href{https://www.3dr.com/}{3DR Iris Quadrotor}. The quadrotor is equipped with a downward-facing camera for visual feature acquisition, featuring a resolution of \(848 \times 480\)  with known intrinsic parameters. All simulations were conducted on an Intel i5-13600KF CPU @ 5.1 GHz. Two sets of comparative simulations are conducted:
\begin{itemize}  
    \item \textbf{Wind-free case}: No wind is introduced in Gazebo. The uncertainties arise from modeling errors and unknown thrust mapping relationship; 

    \item \textbf{Wind case}: A continuous wind with a mean speed of \(3~\text{m/s}\) and a variance of \(0.04~\text{(m/s)}^2\) is applied along the \(\vec{y}_W\) direction.
\end{itemize}  

In simulation, the measurements are obtained from the onboard camera and the Gazebo ground truth. To assess the performance of the estimator, zero-mean Gaussian white noise $\mathcal{N}(\bm{0},\sigma^2\bm{I})$ is added to the ground truth measurements from Gazebo when running the ESKF-VMPC and ESKF-PRE-VMPC algorithms.

\subsubsection{Simulation Results}
As shown in Figs.~\ref{fig:rmse} and ~\ref{fig:evaluation}, the proposed framework consistently achieves the lowest overall image features regulation errors across both scenarios. Additionally, in the wind case, where both IBVS and IBVS-MPC fail, ESKF-based methods successfully complete the inspection task. Besides the RMSE index, it can also be observed from Fig.~\ref{fig:evaluation} that the proposed method has the smoothest curves, which means more stable and higher-quality images, which is precisely the purpose of the inspection mission.


Notably, when the variance of the measurement noise increases from \(\sigma^2=0.001\) to \(0.01\), the RMSE of ESKF-VMPC increases by \(87.57\%\) in \(r\) and \(26.06\%\) in \(\theta\) in the wind-free case, and by \(10.37\%\) in \(r\) and \(24.02\%\) in \(\theta\) in the wind case. In contrast, ESKF-PRE-VMPC changes by only \(1.43\%\) in \(r\) and \(8.33\%\) in \(\theta\) in the wind-free case, and by \(2.30\%\) in \(r\) and \(-4.13\%\) in \(\theta\) in the wind case. This indicates that the proposed method is significantly less sensitive to measurement-noise degradation, which is also consistent with the comparably smooth curves in Fig.~\ref{fig:evaluation}.
\begin{figure}[htbp!]
    \centering
    \includegraphics[width=0.95\linewidth]{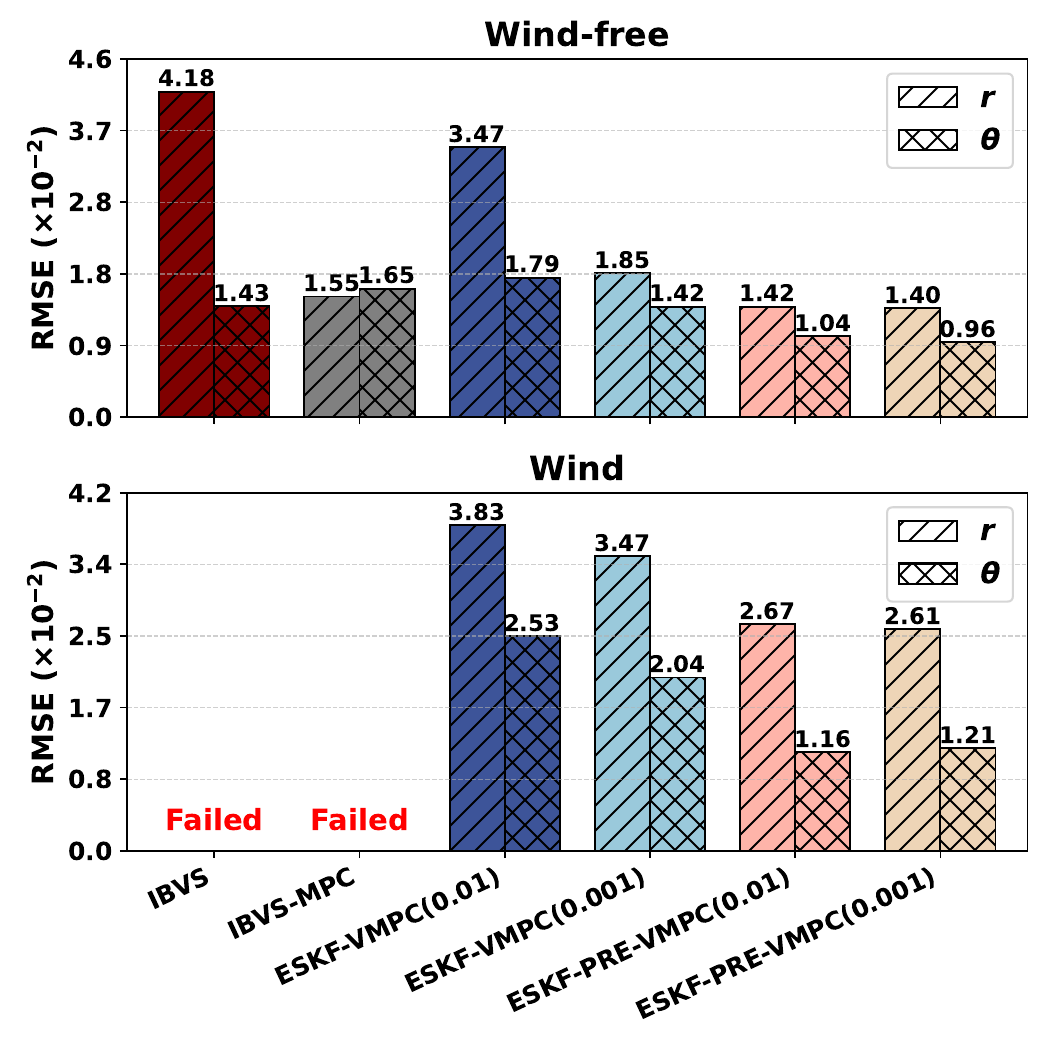}
    \caption{\textbf{Image features RMSE} (lower is better): the values in parentheses after methods indicate the variance $\sigma^2$ of the added zero-mean Gaussian noise.}
    \label{fig:rmse}
    \vspace{-0.1cm}
\end{figure}

\begin{figure}[htbp!]
\centering
\subfloat[\textbf{Wind-free case}]{%
    \centering
    \includegraphics[width=\linewidth]{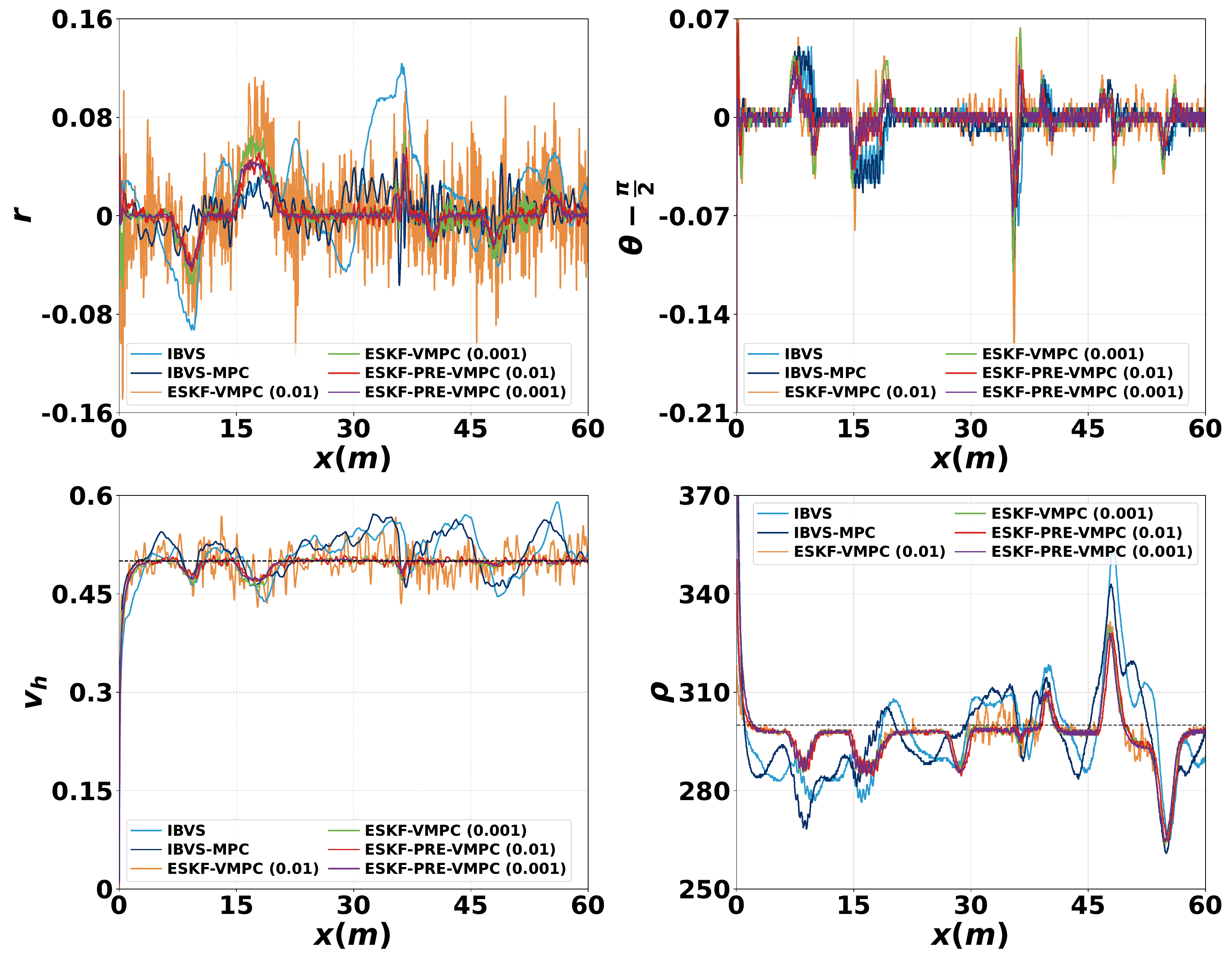}%
    \label{fig:eval_a}%
}
\vspace{0.1em}
\subfloat[\textbf{Wind case}]{%
    \centering
    \includegraphics[width=\linewidth]{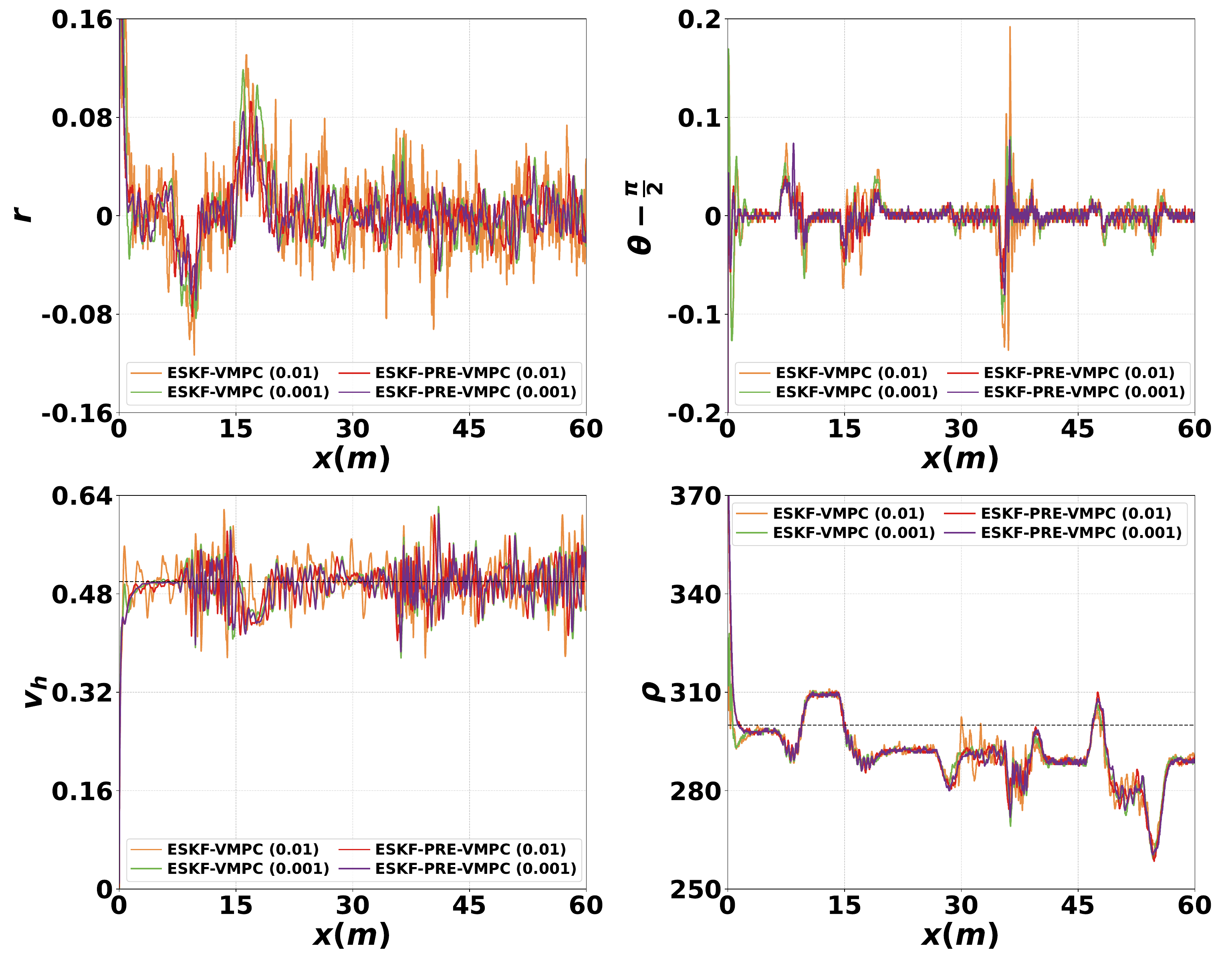}%
    \label{fig:eval_b}%
}
\caption{\textbf{Comparative results} under wind-free and wind cases.}
\label{fig:evaluation}
\vspace{-0.1cm}
\end{figure}

Moreover, a key point that is not directly reflected in the plots is that, during the comparison, the MPC-based methods do \emph{not} use the same weighting matrix $\bm{\Phi}$. Instead, we consider two representative parameter sets as summarized in Table~\ref{tab:q_settings}, and the rationale is explained as follows. In pipeline inspection, due to the underactuated nature of quadrotors, regulating $r$ is significantly more challenging than regulating $\theta$. Specifically, the control of $r$ relies mainly on the translational motion along the ${\vec{y}}_B$ direction, which is strongly coupled with the roll dynamics and introduces a positive feedback effect~\cite{ozawa2011dynamic}. In contrast, the regulation of $\theta$ primarily depends on the yaw angular velocity command $w_z$, which is largely decoupled from the other degrees of freedom, and consequently results in better performance as shown in the plots. Therefore, it is intuitive to assign a larger weight to $r$ so that the optimizer prioritizes $r$ regulation. However, excessively large weights on $r$ tend to generate aggressive control commands, which can easily push the closed-loop system of the underactuated platform toward instability. This motivates a trade-off between control accuracy and stability.

Accordingly, we categorize the MPC weights into two groups according to the magnitude of the penalty on $r$: an \emph{aggressive} setting (larger $r$ weight) and a \emph{conservative} setting (smaller $r$ weight). In simulation, we observed that under the aggressive setting, IBVS-MPC and ESKF-VMPC in the wind case cannot maintain stability and crash at the beginning of the task. To ensure a fair comparison, we therefore report their best performance under the conservative setting, which allows them to complete the mission. By contrast, the proposed ESKF-PRE-VMPC remains stable under the aggressive setting in all cases and achieves higher control accuracy (it also outperforms the baselines under the conservative setting in additional tests). The reason is that the higher effective update rate of the image features enables the MPC to react promptly to system variations, rather than producing control actions based on delayed visual information, which would otherwise degrade responsiveness and may destabilize the underactuated system.
\begin{table}[htbp!]
\vspace{-0.5cm}
\centering
\caption{MPC image feature weight settings for simulations.}
\label{tab:q_settings}
\setlength{\tabcolsep}{3pt} 
\renewcommand{\arraystretch}{1.25}
\begin{tabular}{lcc}
\toprule
\textbf{Setting} & \textbf{$(\phi_\theta,\phi_r)^\mathsf{T}$} & \textbf{Applied in} \\
\midrule
\vspace{0.5em}
\textbf{Aggressive}   & $(0.2,\,1.0)^\mathsf{T}$  &
\makecell[c]{ESKF-VMPC (wind-free)\\ESKF-PRE-VMPC (wind-free \& wind)} \\
\textbf{Conservative} & $(0.3,\,0.3)^\mathsf{T}$ &
\makecell[c]{IBVS-MPC (wind-free \& wind)\\ESKF-VMPC (wind)} \\
\bottomrule
\end{tabular}
\vspace{-0.5cm}
\end{table}

To practically validate the proposed framework, real-world experiments were further conducted on a nano quadrotor platform Crazyflie~\cite{giernacki2017crazyflie}. Due to the limited payload capacity of the original Crazyflie platform, which is barely sufficient to carry components beyond motion capture markers, the propulsion system is modified to increase its payload capability to at least 6 g. This enhancement enables the onboard integration of a lightweight camera and image-transmission module (as shown in Fig.~\ref{fig:crazyflie}), allowing fast indoor algorithm validation. 
By improving the payload capacity of the Crazyflie, a wider range of algorithms can be experimentally validated in space-constrained indoor environments. Detailed modification procedures are available at \url{https://github.com/lw-seu/Crazyflie-modification}.

\subsection{Real-world Validation}
\subsubsection{Real-world setup}
\begin{figure}[htbp!]
    \centering
    \includegraphics[width=0.48\linewidth]{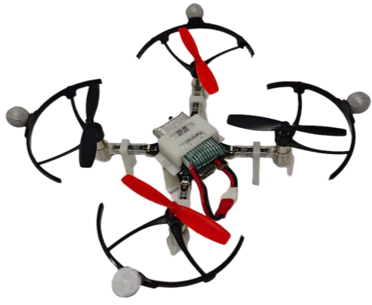}
    \includegraphics[width=0.42\linewidth]{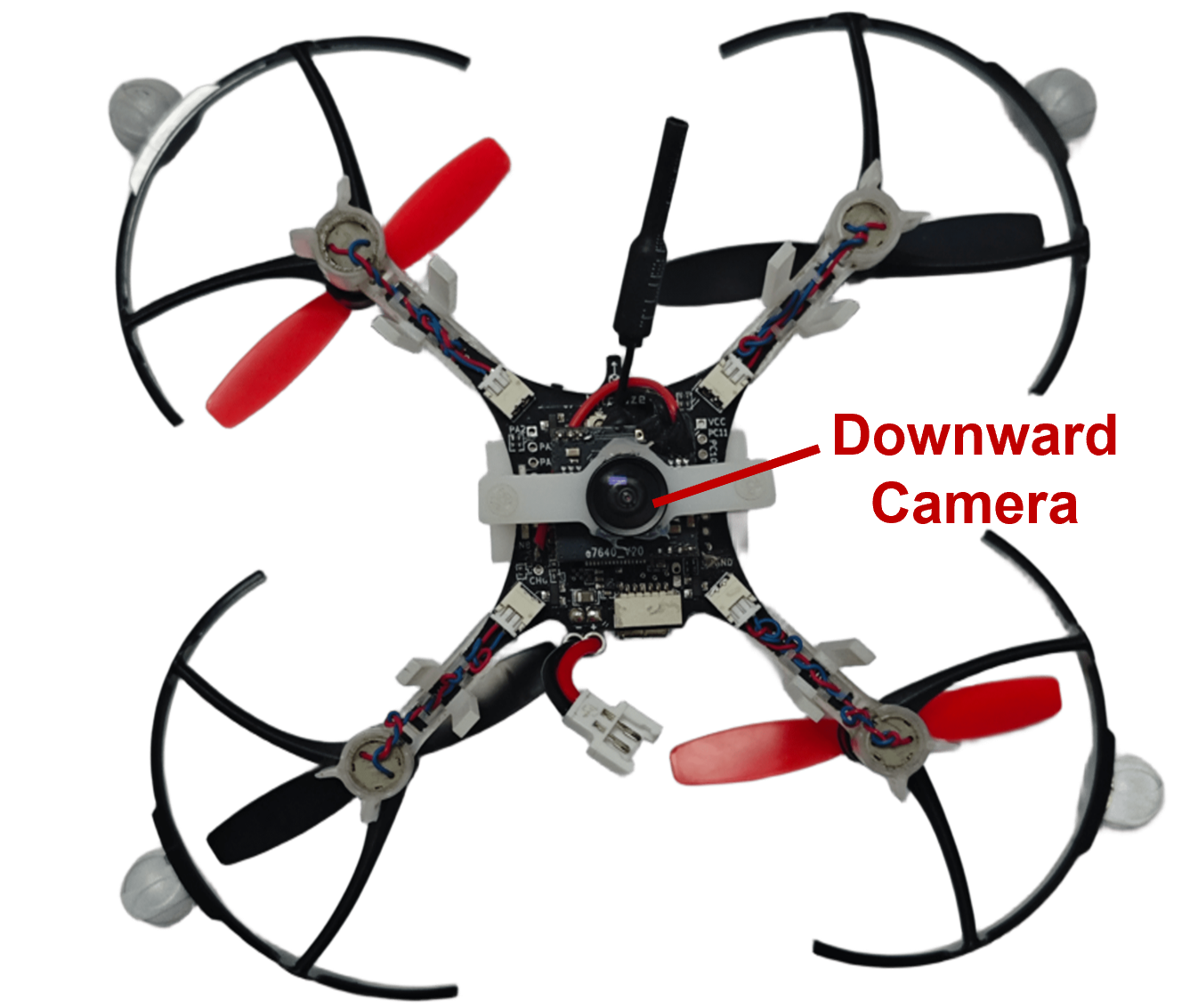}
    \caption{\textbf{Modified Crazyflie platform} with a downward camera with a resolution of 756 $\times$ 560. Left: side view; Right: bottom view.}
    \label{fig:crazyflie}
    \vspace{-0.1cm}
\end{figure}

\begin{figure}[htbp!]
    \centering
    \includegraphics[width=0.9\linewidth]{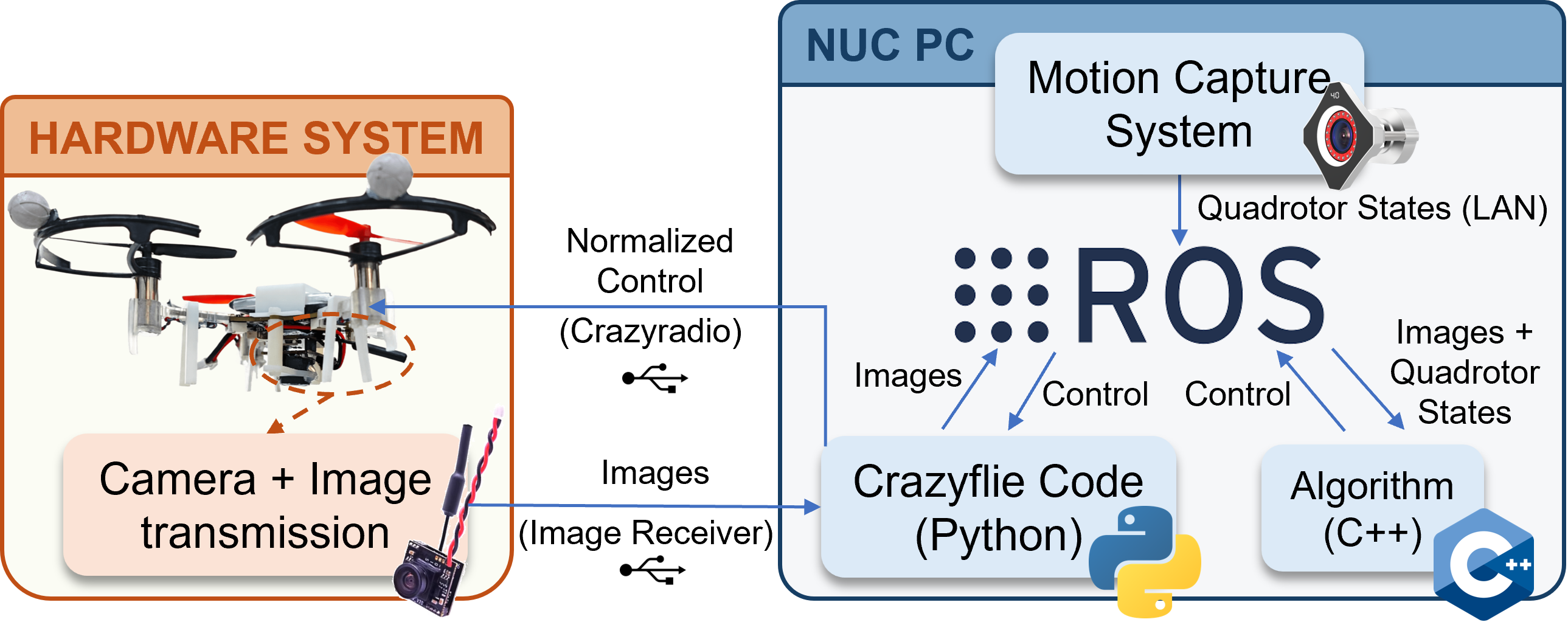}
    \caption{\textbf{Hardware and communication architecture} of the modified Crazyflie platform for real-world experiments.}
    \label{fig:communication}
    \vspace{-0.1cm}
\end{figure}
\begin{figure}[htbp!]
    \centering
    \includegraphics[width=0.75\linewidth]{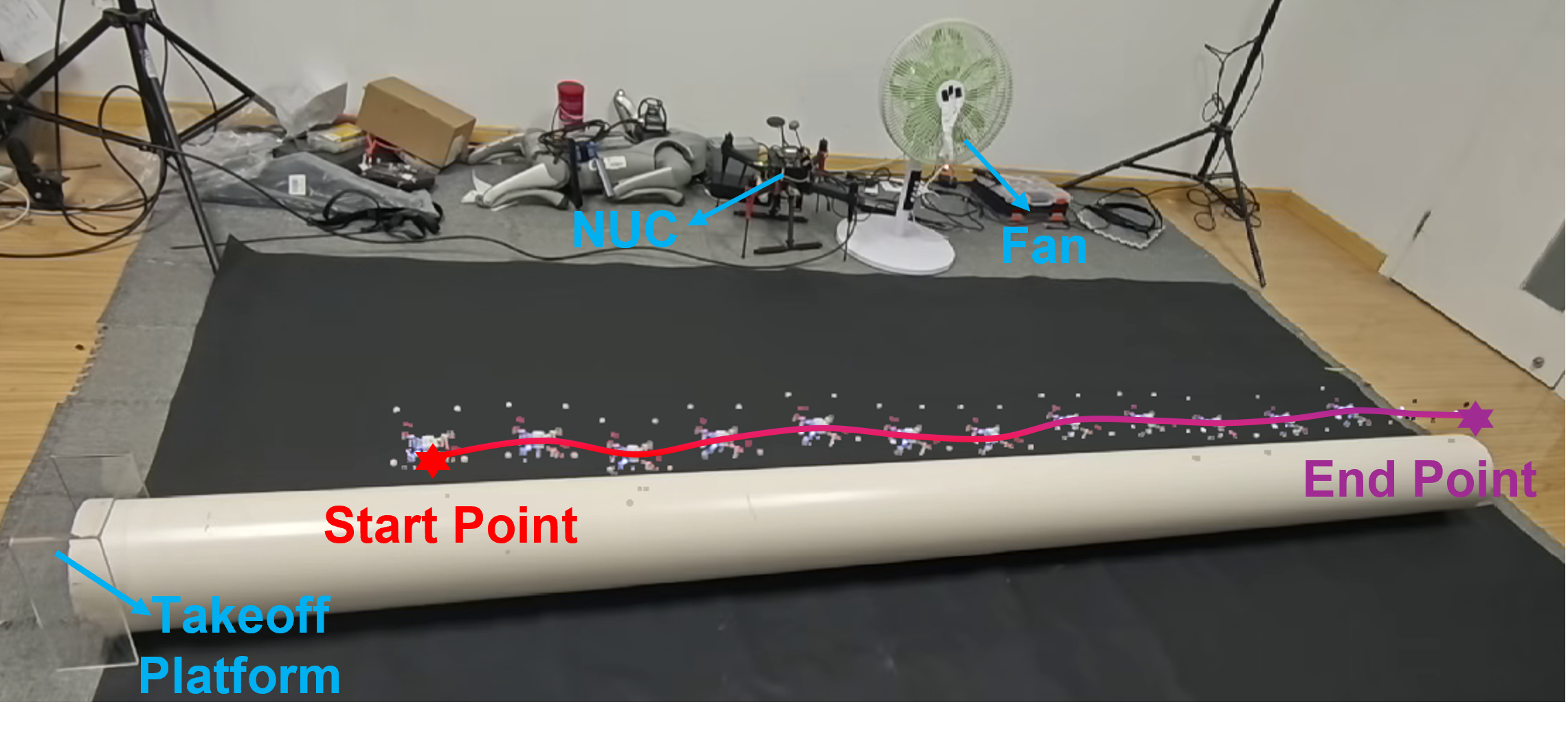}
    \includegraphics[width=0.75\linewidth, height=0.55\linewidth]{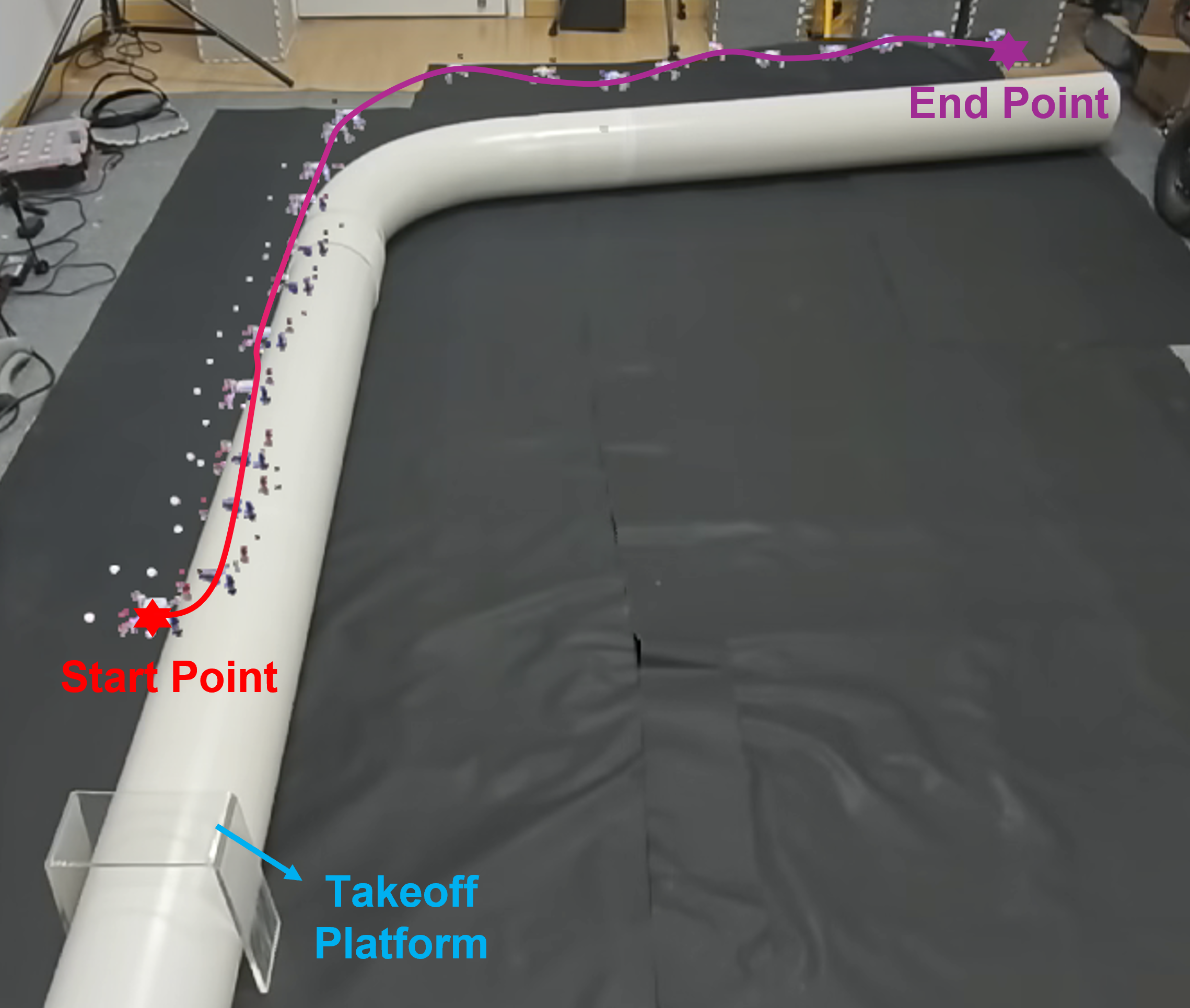}
    \caption{\textbf{Experimental Results}. Top: straight pipeline experiments with an effective inspection length of approximately 2.5 m; Bottom: curved pipeline experiment with an effective inspection length of approximately 4 m.}
    \label{fig:experiment}
    \vspace{-0.1cm}
\end{figure}

To emulate realistic onboard computation, a NUC PC with an Intel i5-12450H CPU is used for algorithm execution. Quadrotor state data are obtained from a four-camera motion capture system (ChingMu MC1300). To enable a modular design, all information on the PC is exchanged via ROS 1, allowing the algorithm to be deployed on different platforms with little or no modification, as shown in Fig.~\ref{fig:communication}. This improves system flexibility and ease of extension, while reducing the effort required for platform migration.

In real-world experiments, we compare ESKF-VMPC and ESKF-PRE-VMPC, as 
both IBVS and IBVS-MPC fail in the wind-disturbance simulation scenario. Three sets of comparative experiments are conducted: (1) straight pipeline without wind disturbance (\textbf{SP-NW}); (2) straight pipeline with wind disturbance (\textbf{SP-W}); and (3) pipeline with a near 100$^{\circ}$ bend segment inserted (\textbf{BP}). The experimental environment is set up as shown in Fig.~\ref{fig:experiment}. 

Since computer vision is not the primary focus of this work, a black cloth is placed beneath the pipeline, and cylindrical PVC pipes are used to reduce the complexity of visual processing. In addition, due to the limited capability of the onboard micro camera and the use of analog image transmission, the visual input is often degraded by poor reception and severe noise. As shown in Fig.~\ref{fig:failure vision}, the proposed method remains effective even when visual measurements are unreliable. The incorporation of image feature prediction enables the system to maintain consistent feature estimation, reducing sensitivity to degraded or misleading observations and improving overall robustness.
\begin{figure}[htbp!]
    \centering
    \includegraphics[width=0.4\linewidth]{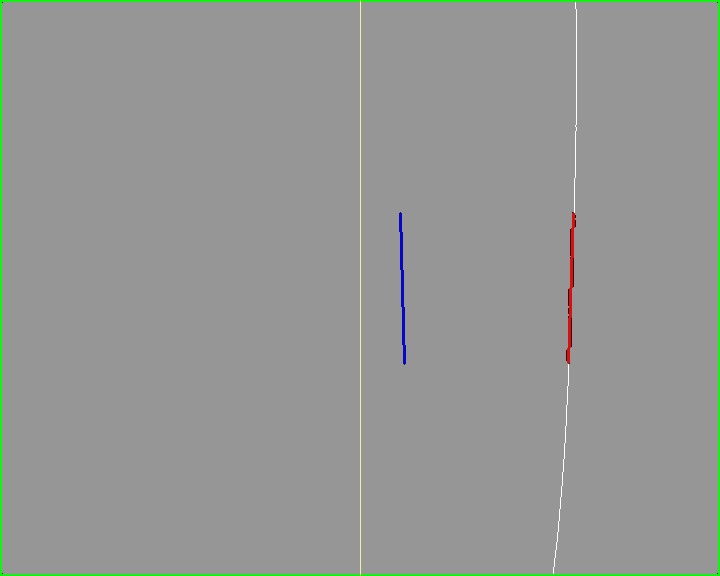}
    \includegraphics[width=0.4\linewidth]{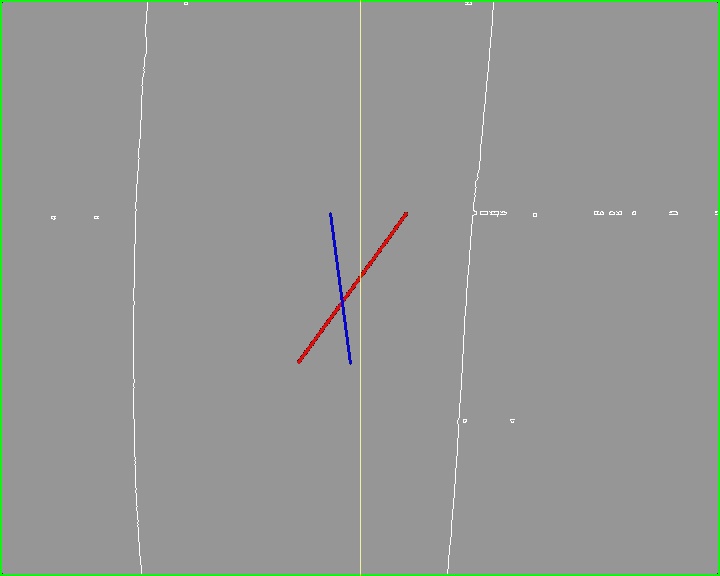}
    \caption{\textbf{Image feature estimation} in the presence of visual detection failures. \cbox[gray!50]{\textcolor{white}{White lines}} denote the detected edges; \textcolor{red}{red lines} denotes the detected midline; \textcolor{blue}{blue lines} denotes the predicted midline; \textcolor{myyellow}{yellow lines} denotes the image centerline. Left: incomplete pipeline edge detection; Right: erroneous detection caused by severe image noise. When visual measurements are unavailable (red lines), the estimation of image features (blue lines) provides reliable feature information.}
    \label{fig:failure vision}
    \vspace{-0.1cm}
\end{figure}

Furthermore, although wind disturbances are generated by a fan, the confined indoor environment leads to strong airflow affecting the entire experimental area. These challenging conditions impose strict requirements on the stability and robustness of the comparative algorithms.

\subsubsection{Real-world results}
\begin{table}[htbp!]
\centering
\caption{\small RMSE of $\theta$ and $r$ for ESKF-VMPC and ESKF-PRE-VMPC (lower is better). Best in \textcolor{blue}{blue bold}, failure by ``--''}
\label{exp_comparison}
\begin{threeparttable}
\setlength{\tabcolsep}{3pt}
\renewcommand{\arraystretch}{1.2}
\begin{tabular}{c|cc|cc|cc}
\Xcline{1-7}{1pt}
\multirow{3}{*}{\textbf{Methods}} 
& \multicolumn{6}{c}{\textbf{RMSE} $\times 10^{-2}$} \\ \Xcline{2-7}{0.4pt}
& \multicolumn{2}{c|}{\textbf{SP-NW}} & \multicolumn{2}{c|}{\textbf{SP-W}} & \multicolumn{2}{c}{\textbf{BP}} \\ \Xcline{2-7}{0.4pt}
& $\theta$ & $r$ & $\theta$ & $r$ & $\theta$ & $r$  \\
\Xcline{1-7}{0.4pt}
ESKF-VMPC & 4.37 & 6.61  & --&--&--&-- \\
ESKF-PRE-VMPC & \textcolor{blue}{\textbf{2.07}} & \textcolor{blue}{\textbf{1.65}} & \textcolor{blue}{\textbf{7.23}}&\textcolor{blue}{\textbf{7.57}}&\textcolor{blue}{\textbf{8.18}}&\textcolor{blue}{\textbf{11.65}} \\
\Xcline{1-7}{1pt}
\end{tabular}
\end{threeparttable}
\end{table}

\begin{figure}[htbp!]
\centering
\subfloat[\textbf{SP-NW} (straight pipeline without wind
disturbance ).]{%
    \centering
    \includegraphics[width=0.95\linewidth]{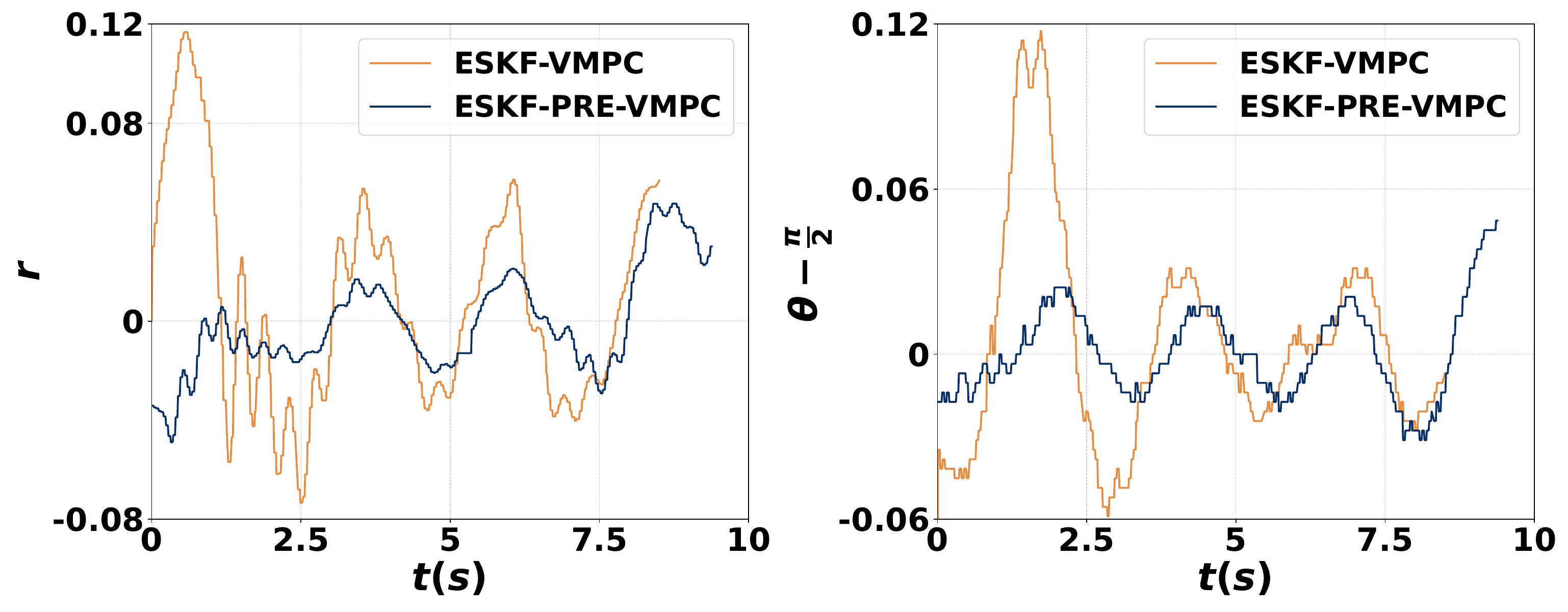}%
    \label{fig:exp_eval_a}%
}
\vspace{0.1em}
\subfloat[\textbf{SP-W} (straight pipeline with wind
disturbance)]{%
    \centering
    \includegraphics[width=0.95\linewidth]{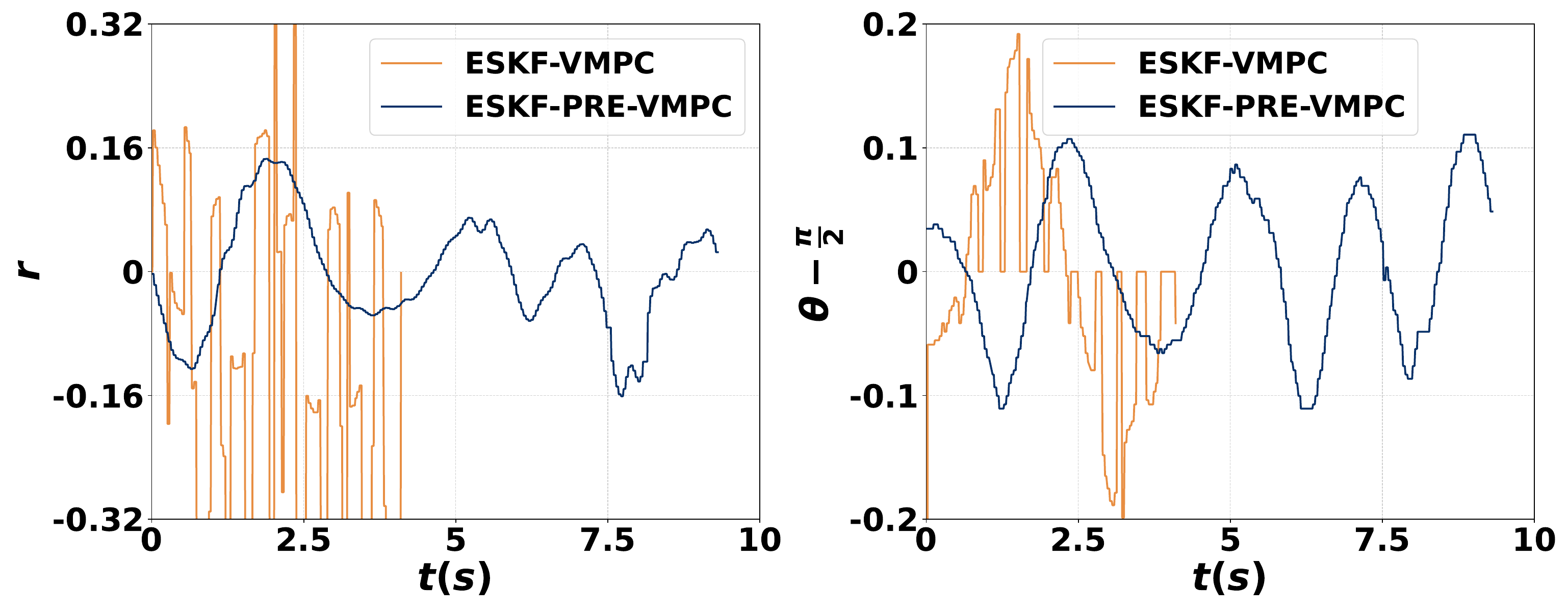}%
    \label{fig:esxp_eval_b}%
}
\caption{\textbf{Comparative results} of ESKF-VMPC and ESKF-PRE-VMPC.}
\label{fig:exp_evaluation}
\vspace{-0.1cm}
\end{figure}

As shown in Table~\ref{exp_comparison} and Fig.~\ref{fig:exp_evaluation}, the proposed ESKF-PRE-VMPC consistently achieves superior performance in all scenarios. In the SP-NW case, both methods successfully complete the inspection task; however, ESKF-PRE-VMPC achieves significantly lower RMSE values, reducing the error by \(52.63\%\) in \(\theta\) and \(75.04\%\) in \(r\). In the SP-W and BP cases, the performance difference becomes more pronounced. The baseline ESKF-VMPC fails to maintain stable operation and eventually crashes, whereas the proposed ESKF-PRE-VMPC still successfully completes the inspection task. These results demonstrate that the proposed method effectively enhances robustness under external disturbances and challenging scenarios involving large-angle turns.


\section{CONCLUSIONS}

This paper presented an autonomous quadrotor pipeline inspection framework for 3D environments based on visual servoing model predictive control. By coupling quadrotor dynamics with image feature kinematics, a unified predictive model was established for control-oriented visual servoing. To handle low-rate visual updates, measurement degradation, and environmental uncertainties, an extended-state Kalman filtering scheme with image feature prediction was integrated into VMPC, yielding the ESKF-PRE-VMPC framework. A terrain-adaptive velocity design was also introduced to maintain inspection speed while adjusting vertical motion to terrain variations without prior terrain knowledge. Gazebo simulations and real-world experiments showed more stable feature regulation, stronger robustness, and more reliable tracking than representative baselines.

These results indicate that combining image feature prediction with disturbance-aware visual servoing MPC effectively mitigates the mismatch between low-rate visual sensing and high-rate control in near-proximity UAV inspection. Future work will validate the framework in more realistic industrial settings with stronger disturbances, harsher perception conditions, more complex pipeline geometries, and tighter integration of visual detection and uncertainty-aware prediction.




\bibliographystyle{IEEEtran}
\bibliography{ref}

\end{document}